\documentclass{article}

\usepackage{amsmath,amsfonts,bm}

\def\eqref#1{equation~\ref{#1}}

\def\1{\bm{1}}

\DeclareMathAlphabet{\mathsfit}{\encodingdefault}{\sfdefault}{m}{sl}
\SetMathAlphabet{\mathsfit}{bold}{\encodingdefault}{\sfdefault}{bx}{n}

\usepackage{subcaption}

\usepackage{url}            
\usepackage{booktabs}       
\usepackage{amsfonts}       
\usepackage{nicefrac}       
\usepackage{microtype}      
\usepackage[dvipsnames,table]{xcolor}         
\usepackage{multirow}
\usepackage{amsmath}
\usepackage{amssymb}
\usepackage{mathtools}
\usepackage{amsthm}
\usepackage{graphicx}
\usepackage{algorithm}
\usepackage{algpseudocode}
\usepackage{makecell}

\usepackage{hyperref}

\usepackage[preprint]{icml2026}

\usepackage[capitalize,noabbrev]{cleveref}

\theoremstyle{plain}
\newtheorem{theorem}{Theorem}[section]

\newtheorem{lemma}[theorem]{Lemma}

\theoremstyle{definition}

\theoremstyle{remark}

\usepackage[textsize=tiny]{todonotes}

\icmltitlerunning{Submission and Formatting Instructions for ICML 2026}

\begin{document}

\twocolumn[
  \icmltitle{FlattenGPT: Depth Compression for Transformer with Layer Flattening}



  \icmlsetsymbol{equal}{*}

  \begin{icmlauthorlist}
    \icmlauthor{Ruihan Xu}{pku}
    \icmlauthor{Qingpei Guo}{ant}
    \icmlauthor{Yao Zhu}{thu}
    \icmlauthor{Xiangyang Ji}{thu}
    \icmlauthor{Ming Yang}{ant}
    \icmlauthor{Shiliang Zhang}{pku}
  \end{icmlauthorlist}

  \icmlaffiliation{pku}{State Key Laboratory of Multimedia Information Processing, School of Computer Science, Peking University}
  \icmlaffiliation{thu}{Tsinghua University}
  \icmlaffiliation{ant}{AntGroup}

  \icmlcorrespondingauthor{Shiliang Zhang}{slzhang.jdl@@pku.edu.cn}

  \icmlkeywords{Model Pruning, Structrued Pruning, Layer Pruning, Channel Pruning, LLM, ICML}

  \vskip 0.3in
]



\printAffiliationsAndNotice{}  

\begin{abstract}
Recent works have indicated redundancy across transformer blocks, prompting the research of depth compression to prune less crucial blocks.
However, current ways of entire-block pruning suffer from risks of discarding meaningful cues learned in those blocks, leading to substantial performance degradation.
As another line of model compression, channel pruning can better preserve performance, while it cannot reduce model depth and is challenged by inconsistent pruning ratios for individual layers.
To pursue better model compression and acceleration, this paper proposes \textbf{FlattenGPT}, a novel way to detect and reduce depth-wise redundancies.
By flatting two adjacent blocks into one, it compresses the network depth, meanwhile enables more effective parameter redundancy detection and removal.
FlattenGPT allows to preserve the knowledge learned in all blocks, and remains consistent with the original transformer architecture. Extensive experiments demonstrate that FlattenGPT enhances model efficiency with a decent trade-off to performance.
It outperforms existing pruning methods in both zero-shot accuracies and WikiText-2 perplexity across various model types and parameter sizes.
On LLaMA-2/3 and Qwen-1.5 models, FlattenGPT retains 90-96\% of zero-shot performance with a compression ratio of 20\%.
It also outperforms other pruning methods in accelerating LLM inference, making it promising for enhancing the efficiency of transformers.
\end{abstract}

\begin{figure*}[ht]
\centering
\includegraphics[width=0.98\linewidth]{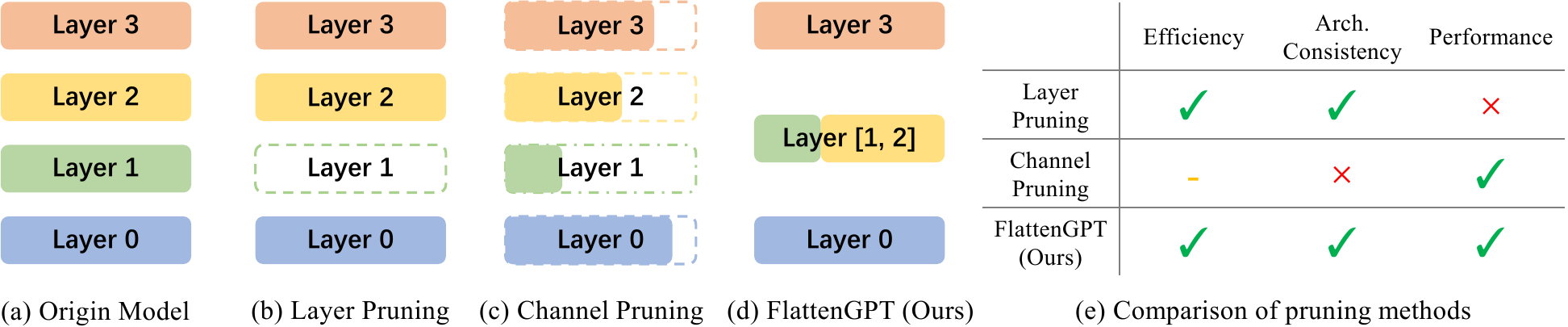}
\caption{Comparison of pruning methods. (a) The original architecture. (b) Layer pruning discards all knowledge in removed blocks. (c) Channel pruning cannot compress model depth and leads to inconsistent architecture across layers. (d) Our method bridges the gap, producing a compact model with marginal performance degradation. (e) compares efficiency, architecture consistency, and performance.}
\label{fig:flattengpt_intro}
\end{figure*}

\section{Introduction}
\label{sec:introduction}

Recent advancements in Large Language Models (LLMs)~\citep{gpt3,opt,palm,llama_v1,llama_v2,llama_v3} have led to breakthroughs in understanding and generation of natural language~\citep{hadi2023survey,zhao2023survey,minaee2024survey}.
However, the cost of heavy computation and extremely large memory consumption make it challenging to deploy on resource-limited devices. 
To mitigate these issues, model compression, as shown in Figure~\ref{fig:flattengpt_intro}, has emerged as a popular post-training solution, reducing model size and complexity by removing model redundancy~\citep{gupta2022compression,zhu2023survey}.

Depth compression~\citep{sleb,shortgpt} aims at reducing the redundancy across transformer blocks.
This redundancy manifests itself in the cross-layer similarity~\citep{ineffective_deep_layers,transformer_layers_as_painters,curse_of_depth}.
Figure~\ref{fig:layer_redundancy}(a) illustrates that the inputs of adjacent blocks have high similarity in LLMs, which is caused by the residual path spanning the entire LLM.
This similarity is particularly evident in LLMs, indicating that there is a certain amount of redundancy within them.
Depth compression methods aim to reduce this cross-layer redundancy to achieve a compact network architecture.
Compared to other pruning methods such as channel pruning~\citep{llm_pruner,slicegpt} or 2:4 pruning~\citep{sparsegpt,wanda}, depth compression methods have an evident advantage in inference speed with the same number of parameters~\citep{sleb}. 
However, previous depth compression methods usually adopt layer or block pruning, which removes the entire block selected by measuring how crucial this block is~\citep{shortgpt,rm,shortenedllama,sleb,blockpruner,finercut}.
It may simultaneously remove some useful knowledge learned in the pruned blocks, leading to substantial performance degradation.

As another line of model compression, channel pruning~\citep{llm_pruner,slicegpt,llm_surgeon,modegpt} conducts a fine-grained parameter preservation, thus leading to better performance.
However, these methods usually assign different pruning ratios for individual layers.
This inconsistency in the module architecture causes inconvenience in hyperparameter tuning or model deployment, such as LoRA hyperparameters~\citep{lora}.
Moreover, channel pruning cannot tackle the redundancy across layers, resulting in a deeper architecture and higher latency in practice.

This paper aims to conquer those issues, and proposes a fine-grained depth compression method called FlattenGPT.
FlattenGPT is composed of two stages to reduce the model depth, meanwhile preserve crucial knowledge.
In the first stage, we propose a new operation named flattening, to merge adjacent transformer blocks by concatenating their parameters and hidden states.
This operation changes the sequential execution of adjacent transformer blocks to parallel execution, where only the input of the blocks is altered.
Since the input features of some adjacent layers in LLMs are inherently of high similarity, flattening operation has a marginal impact on the model performance.
The subsequent stage employs a channel pruning method to streamline the merged transformer blocks.
Channel pruning can identify the critical channels within the merged blocks, 
allowing for a more efficient removal of redundancy.

FlattenGPT has clear advantages over previous pruning methods.
As shown in Figure~\ref{fig:flattengpt_intro}, unlike layer pruning methods, flattening preserves the knowledge embedded in each layer, raising the performance ceiling of the depth compression.
Compared with channel pruning, FlattenGPT yields a consistent architecture with lower depth, resulting in high efficiency as well as easy tuning and deployment.
This method bridges the gap between depth compression and channel pruning, allowing for comprehensive model compression.
Extensive experiments demonstrate that FlattenGPT preserves up to 96\% of zero-shot performance with a compression rate of 20\% on LLaMA 2~\citep{llama_v2}, outperforming prior depth compression approaches.
To the best of our knowledge, FlattenGPT is an original effort on transformer compression through layer flattening. It shows potential to establish a novel comprehensive paradigm that enhances the depth compression of transformer architectures.

\begin{figure*}[t]
\centering
\includegraphics[width=0.98\linewidth]{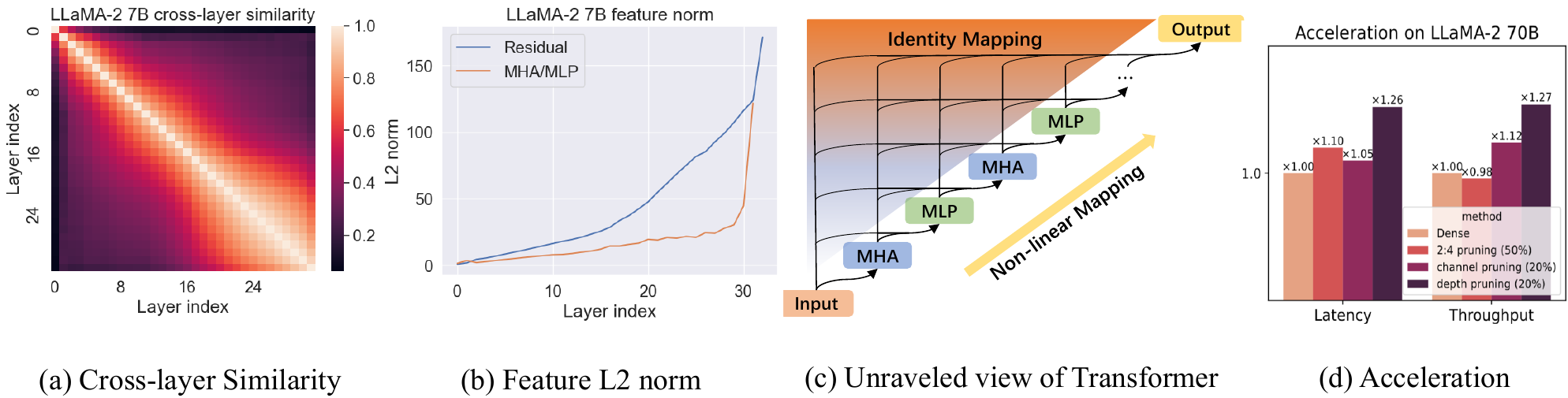}
\caption{Illustration of redundancy in transformer blocks. (a) LLaMA-2 7B exhibits high cross-layer similarity. (b) The scale of the residual path grows faster than the MHA/MLP blocks, which dominates the deep hidden states. (c) The unraveled view of transformer architecture, where the residual path traversing the entire network leads to a considerable cross-layer similarity. (d) The acceleration comparison between different pruning methods.}
\label{fig:layer_redundancy}
\end{figure*}

\section{Preliminary and Analysis}
\label{sec:preliminary}

\subsection{Preliminary of Transformer Architecture}
The Pre-LN transformer architecture in LLMs~\citep{llama_v1} consists of multiple decoder layers, each composed of two blocks, \emph{i.e.}, Multi-Head Attention (MHA) and Multi Layer Perceptron (MLP).
Let \(\ell\in\left\{0, 1, \cdots, L-1\right\}\) denote the layer index, \(T\), \(d\), \(d_{h}\), \(d_{int}\) and \(H\) denote the sequence length, hidden dimension, head dimension, intermediate dimension, and the number of attention heads, respectively.
The formulation of the \(\ell\)-th Transformer block \(B_{\ell}\) is denoted as
\begin{equation}
\begin{aligned}
    \tilde{\bm{H}}^{\ell-1}
    =&\bm{H}^{\ell-1} + \operatorname{MHA}^{\ell}\left(\operatorname{LN}_{a}^{\ell}\left(\bm{H}^{\ell-1}\right)\right),\\
    \bm{H}^{\ell}
    =&\tilde{\bm{H}}^{\ell-1} + \operatorname{MLP}^{\ell}\left(\operatorname{LN}_{p}^{\ell}\left(\tilde{\bm{H}}^{\ell-1}\right)\right),
\end{aligned}
\label{eq:transformer}
\end{equation}
where \(\bm{H}^{\ell}\in\mathbb{R}^{T\times d}\) denotes the output of the \(\ell\)-th layer, \(\operatorname{MHA}^{\ell}\), \(\operatorname{MLP}^{\ell}\), \(\operatorname{LN}_{a}^{\ell}\), and \(\operatorname{LN}_{p}^{\ell}\) denote the MHA block, MLP block, MHA normalization, and MLP normalization of the \(\ell\)-th Transformer layer, respectively.
The normalization layers are usually composed of a root mean square normalization and an element-wise affinement:
\begin{equation}
\begin{aligned}
    \operatorname{LN}_{a}^{\ell}\left(\bm{X}\right)
    =&\operatorname{RMSNorm}\left(\bm{X}\right)\operatorname{diag}\left(\bm{\alpha}_{a}^{\ell}\right),\\
    \operatorname{LN}_{p}^{\ell}\left(\bm{X}\right)
    =&\operatorname{RMSNorm}\left(\bm{X}\right)\operatorname{diag}\left(\bm{\alpha}_{p}^{\ell}\right),
\end{aligned}
\end{equation}
where \(\operatorname{RMSNorm}\left(\bm{X}\right)\) applies \(\bm{X}\leftarrow \bm{X}/\left\lVert \bm{X}\right\rVert\) to each row of \(\bm{X}\), \(\bm{\alpha}_{a}\in\mathbb{R}^{d}\) and \(\bm{\alpha}_{p}\in\mathbb{R}^{d}\) are the parameters of elementwise affinement.

\paragraph{\color{CornflowerBlue}{MHA block}}
Let \(\phi(\cdot)\) denote the softmax operation. The MHA block is defined as
\begin{equation}
\begin{aligned}
    &\operatorname{MHA}^{\ell}\left(\bm{X}\right)\\=&
    \sum_{i=1}^{H}
    \phi
    \left(
    \psi_{r}\left(\bm{X}\bm{W}_{Q,i}^{\ell}\right)
    \psi_{r}^{\top}\left(\bm{X}\bm{W}_{K,i}^{\ell}\right)
    \right)\
    \bm{X}\bm{W}_{V,i}^{\ell}\bm{W}_{O,i}^{\ell},
\end{aligned}
\end{equation}
where \(\bm{X}\in\mathbb{R}^{T\times d}\) is the input feature, \(\bm{W}_{Q,i}^{\ell}\), \(\bm{W}_{K,i}^{\ell}\), \(\bm{W}_{V,i}^{\ell}\in\mathbb{R}^{d\times{d_{h}}}\), and \(\bm{W}_{O,i}^{\ell}\in\mathbb{R}^{{d_{h}}\times d}\) denote the query, key, value, and output matrices of the \(i\)-th head in the \(\ell\)-th layer, respectively.
\(\psi_{r}(\cdot)\) denotes the positional embedding function.
For similicity, we denote \(\bm{W}_{Q} = \left[\bm{W}_{Q,1}\ \bm{W}_{Q,2}\ \cdots\ \bm{W}_{Q,H}\right]\) as the horizontal concatenation of query parameters from all heads, and similar to \(\bm{W}_{K}\) and \(\bm{W}_{V}\).
We denote \(\bm{W}_{O} = \left[\bm{W}_{O,1}^{\top}\ \bm{W}_{O,2}^{\top}\ \cdots\ \bm{W}_{O,H}^{\top}\right]^{\top}\) as the vertical concatenation of output parameters.

\paragraph{\color{Green}MLP block}
The MLP block is defined as
\begin{equation}
    \operatorname{MLP}^{\ell}\left(\bm{X}\right) =
    \psi_{s}\left(\bm{X}\bm{W}_{U}^{\ell}\right)\bm{W}_{D}^{\ell},
\end{equation}
where \(\bm{W}_{U}^{\ell}\in\mathbb{R}^{d \times d_{int}}\) and \(\bm{W}_{D}^{\ell}\in \mathbb{R}^{d_{int}\times d}\) denote the up and down matrix and \(\psi_{s}(\cdot)\) is the non-linear activation function.
\(\bm{X}\in\mathbb{R}^{T\times d}\) is the input matrix.
Prevailing LLMs~\citep{llama_v1,llama_v2,qwen} employ a gated MLP.
Its up matrix is composed of a up matrix and gate matrix \(\bm{W}_{U}^{\ell}=\left[\bm{W}_{u}\ \bm{W}_{g}\right]\), and the non-linear function is defined as \(\psi_{s}\left(\bm{X}\bm{W}_{U}^{\ell}\right)=\bm{X}\bm{W}_{u}^{\ell}\odot\psi_{g}\left(\bm{X}\bm{W}_{g}^{\ell}\right)\) where \(\psi_{g}\) is the gate activation.
For the following discussions, we take the gated MLP as the baseline architecture.

\subsection{Analysis on the Redundancy in Depth}
As illustrated in Figure~\ref{fig:layer_redundancy}(a), deep transformer architecture exhibits high cross-layer similarity.
This is caused by the curse of depth~\citep{curse_of_depth}, which implies that the deep layers are dominated by the residual path, \emph{i.e.}, identity mapping.
As shown in Figure~\ref{fig:layer_redundancy}(b), the L2 norm of the residual path is much larger than the MHA/MLP output in deep layers, dominating the forward propagation.
An intuitive interpretation is shown in the triangle-shaped unraveled view of transformer architecture in Figure~\ref{fig:layer_redundancy}(c).
The amount of residual features increases in deep layers and surpasses the non-linear blocks, leading to approximately identity mapping.
This analysis shows the cross-layer redundancy in the transformers.

To better illustrate the layer redundancy in deep transformers, the following part provides a theoretical analysis.
Assume that the input feature \(\bm{H}^{\ell}\), intermediate vectors \(\tilde{\bm{H}}^{\ell}\), and the model parameter matrix \(\bm{W}^{\ell}\) follow normal and independent distributions with mean \(0\) for all layers.
We first model the growth of the hidden states in a transformer architecture:
\begin{lemma}[The growth of the hidden state variance]
Let \(\sigma_{\bm{H}^{\ell}}^{2}\) denote the variance of \(\bm{H}^{\ell}\).
The variance \(\sigma_{\bm{H}^{\ell}}^{2}\) could increase quadratically with depth \(\ell\):
\begin{equation}
\sigma^2_{\bm{H}^{\ell}} = \sigma_{\bm{H}^{0}}^2 + \sum_{k=1}^{\ell} \Theta\left( \sigma_{\bm{H}^{k}}\right) \leq \Theta\left(\ell^2\right).
\end{equation}
\label{thm:variance_bound}
\end{lemma}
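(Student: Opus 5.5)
The plan is to work block by block with the residual recursion in \eqref{eq:transformer}, using the independence and zero-mean assumptions so that cross terms vanish in expectation. I treat one layer as the composition of the MHA and MLP sublayers. Writing \(\bm{H}^{\ell}=\bm{H}^{\ell-1}+\bm{F}^{\ell}\), with \(\bm{F}^{\ell}\) the sum of the two sublayer outputs, I take variances:
\[
\sigma^2_{\bm{H}^{\ell}}=\sigma^2_{\bm{H}^{\ell-1}}+\sigma^2_{\bm{F}^{\ell}}+2\operatorname{Cov}\left(\bm{H}^{\ell-1},\bm{F}^{\ell}\right).
\]
The covariance term vanishes because the zero-mean weights are independent of \(\bm{H}^{\ell-1}\): \(\mathbb{E}[\bm{H}^{\ell-1}\bm{F}^{\ell}]\) factors through \(\mathbb{E}[\bm{W}^{\ell}]=0\). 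Telescoping from \(0\) to \(\ell\) gives \(\sigma^2_{\bm{H}^{\ell}}=\sigma^2_{\bm{H}^0}+\sum_{k}\sigma^2_{\bm{F}^{k}}\). It then remains to show \(\sigma^2_{\bm{F}^{k}}=\Theta(\sigma_{\bm{H}^{k}})\), up to harmless index shifts.

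The sublayer estimate is the key step. Because of the Pre-LN structure, the input to each sublayer is \(\operatorname{RMSNorm}(\cdot)\operatorname{diag}(\bm{\alpha})\), whose scale is \(O(1)\) and does not depend on \(\sigma_{\bm{H}}\). For the MLP I would bound \(\operatorname{Var}(\psi_s(\bm{X}\bm{W}_U)\bm{W}_D)\) by \(d_{int}\sigma_W^2\) times the second moment of the gated activation, which is \(\Theta(1)\) for a normalized input. For the MHA, the softmax rows are convex weights, so the output variance is at most that of \(\bm{X}\bm{W}_V\bm{W}_O\), which is again \(O(1)\) in the normalized input. Taken at face value, the sublayer variance is therefore \(\Theta(1)\), and this would give only linear growth. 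To reach the stated \(\Theta(\sigma_{\bm{H}^k})\) dependence, I would follow the curse-of-depth argument: the elementwise affine parameters \(\bm{\alpha}\) together with \(\bm{W}\), or the un-normalized scale of the hidden state, make the sublayer output scale with \(\sigma_{\bm{H}^{k}}\). I would phrase this as an upper bound \(\sigma^2_{\bm{F}^k}\le C\,\sigma_{\bm{H}^{k-1}}\), using \(\lVert\text{sublayer output}\rVert\lesssim\lVert\bm{\alpha}\rVert\,\lVert\bm{W}\rVert\) together with the assumption that the learned scales track the residual scale.

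The final step is to close the recursion. Set \(s_\ell=\sigma_{\bm{H}^\ell}\), so that \(s_\ell^2\le s_{\ell-1}^2+C s_{\ell-1}\). Then \(s_\ell\le s_{\ell-1}+C/2\), since \((s+C/2)^2\ge s^2+Cs\). By induction \(s_\ell\le s_0+C\ell/2\), and hence \(\sigma^2_{\bm{H}^\ell}\le(s_0+C\ell/2)^2=\Theta(\ell^2)\), which is the claimed quadratic bound. The word ``could'' in the statement reflects that this is the worst case, attained when \(\sigma^2_{\bm{F}^k}\) is as large as \(\Theta(s_k)\).

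I expect the main obstacle to be justifying \(\sigma^2_{\bm{F}^k}=\Theta(\sigma_{\bm{H}^k})\) rather than \(\Theta(1)\), since RMSNorm removes the input scale. The justification must come from the modeling assumptions on the parameter distributions, in particular \(\bm{\alpha}\) and the weight variance growing with depth. The independence-based vanishing of the cross term is routine under the stated Gaussian assumptions, and the recursion-closing step is elementary.
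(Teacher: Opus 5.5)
Your proposal has a genuine gap, and it comes from your first step rather than from the sublayer estimate. By setting \(\operatorname{Cov}(\bm{H}^{\ell-1},\bm{F}^{\ell})=0\) you discard exactly the term that, in the paper's proof, produces the \(\Theta(\sigma_{\bm{H}^{k}})\) increment. The paper keeps the cross terms: it writes \(\sigma^2_{\tilde{\bm{H}}^{\ell}}=\sigma^2_{\bm{H}^{\ell}}+\sigma^2_{\text{Attn}}+2\rho_1\sigma_{\bm{H}^{\ell}}\sigma_{\text{Attn}}\), and analogously with \(\rho_2\) for the MLP. It treats the sublayer variances as depth-independent constants, \(\sigma^2_{\text{Attn}}\sim\sigma_{\bm{W}}^2\) and \(\sigma^2_{\operatorname{MLP}}\sim\sigma_{\bm{W}}^4\), which is precisely your own \(\Theta(1)\) estimate from the Pre-LN normalization. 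The per-layer increment is then \(\sigma_{\bm{W}}^2+\sigma_{\bm{W}}^4+2\rho_1\sigma_{\bm{H}^{\ell}}\sigma_{\bm{W}}+2\rho_2\sigma_{\tilde{\bm{H}}^{\ell}}\sigma_{\bm{W}}^2=\Theta(\sigma_{\bm{H}^{\ell}})\). Quadratic growth follows under the explicit extra hypothesis \(\rho_1,\rho_2>c>0\), i.e.\ the residual stream stays positively correlated with the sublayer outputs. Your vanishing-covariance step is what the literal zero-mean independence assumption would give. That is exactly why the paper does not use it and parametrizes the cross terms by \(\rho_1,\rho_2\) instead.

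With the covariance removed, your computation actually establishes only \(\sigma^2_{\bm{H}^{\ell}}=\sigma^2_{\bm{H}^{0}}+\Theta(\ell)\). This satisfies \(\le\Theta(\ell^2)\) only trivially, does not give the \(\Theta(\sigma_{\bm{H}^{k}})\) increments in the statement, and cannot show that quadratic growth is attained. Your proposed repair lets \(\bm{\alpha}\) and \(\bm{W}\) grow with depth so that \(\sigma^2_{\bm{F}^{k}}\asymp\sigma_{\bm{H}^{k}}\). That is not among the hypotheses (the paper treats \(\sigma_{\bm{W}}\) as a depth-independent constant), and it contradicts your own normalization argument.

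The fix is to replace the first step by \(\operatorname{Cov}(\bm{H}^{\ell-1},\bm{F}^{\ell})=\rho\,\sigma_{\bm{H}^{\ell-1}}\sigma_{\bm{F}^{\ell}}\) with \(\rho\) bounded away from zero, and to keep your \(\Theta(1)\) sublayer estimate. Your recursion-closing step then goes through unchanged, and \(s_\ell\le s_{\ell-1}+C/2\) is a cleaner way to close it than the paper's induction on \(\Theta\)-classes. For attainability, \(\rho>c\) gives the matching lower bound \(s_\ell^2\ge s_{\ell-1}^2+c's_{\ell-1}\), hence \(s_\ell\ge s_{\ell-1}+t\) for some constant \(t>0\).
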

This upper bound is tight, so it is attainable. This conclusion indicates that deep transformers often come with imbalanced feature variance across layers, which is consistent with the empirical results in Figure~\ref{fig:layer_redundancy}(b) and Figure~\ref{fig:more_empirical_results} in the Appendix.
Then the following lemma further illustrates why deeper layers are redundant:
\begin{lemma}[The norm of gradient]
Let \(\frac{\partial y}{\partial \bm{H}^{\ell}}\) denote the partial derivative of the model output \(y\) to the \(\ell\)-th hidden states \(\bm{H}^{\ell}\).
The Euclidean norm of this partial derivative is bounded by
\begin{equation}
\left\| \frac{ \partial y}{\partial \bm{H}^{\ell}} \right\|_2 \leq \prod_{k=\ell}^{L} \left( 1 + \frac{1}{\sigma_{\bm{H}^{\ell}}} A + \frac{1}{\sigma_{\bm{H}^{\ell}}^2} B \right),
\end{equation}
where \(A\) and \(B\) are constants for the Transformer network. Specifically, when \(\ell=L-c\), where \(c\) is a constant number, the limitation of the right-hand side is 1.
\label{thm:gradient_bound}
\end{lemma}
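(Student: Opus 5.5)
The plan is to apply the chain rule through the residual stream and bound each layer's Jacobian separately. Write the $k$-th layer as $\bm{H}^{k} = \bm{H}^{k-1} + F_k(\bm{H}^{k-1})$, where $F_k$ collects the MHA and MLP branches of \eqref{eq:transformer}. Then
\begin{equation*}
\frac{\partial y}{\partial \bm{H}^{\ell}} = \frac{\partial y}{\partial \bm{H}^{L}}\prod_{k=\ell+1}^{L}\left(\bm{I} + \frac{\partial F_k}{\partial \bm{H}^{k-1}}\right).
\end{equation*}
Submultiplicativity and the triangle inequality reduce the statement to a per-layer estimate $\|\bm{I}+\partial F_k/\partial \bm{H}^{k-1}\|_2 \le 1 + \|\partial F_k/\partial \bm{H}^{k-1}\|_2$. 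The output gradient $\partial y/\partial \bm{H}^{L}$, which passes through the final norm and head, is absorbed into a constant, or normalized to one.

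The core step is to show $\|\partial F_k/\partial \bm{H}^{k-1}\|_2 \le A/\sigma_{\bm{H}^{k}} + B/\sigma_{\bm{H}^{k}}^2$. I would split $F_k$ into the MHA branch composed with $\operatorname{LN}_a^k$ and the MLP branch composed with $\operatorname{LN}_p^k$, and apply the chain rule inside each branch.

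The key point is that the RMSNorm Jacobian at input $\bm{x}$ is
\begin{equation*}
\frac{1}{\|\bm{x}\|}\left(\bm{I} - \frac{\bm{x}\bm{x}^\top}{\|\bm{x}\|^2}\right),
\end{equation*}
whose norm is $1/\|\bm{x}\|$, and under the Gaussian assumption $\|\bm{x}\|\approx \sqrt{d}\,\sigma_{\bm{H}^{k}}$. The Jacobians of the MHA and MLP maps with respect to their normalized inputs are bounded by constants depending only on the weights, the affine parameters $\bm{\alpha}$, and Lipschitz bounds for the softmax and gate activation. This is because normalized inputs lie in a bounded set, so softmax derivatives and the activation $\psi_g$ are uniformly controlled. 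This yields the first-order $A/\sigma$ term.

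The $B/\sigma^2$ term comes from composing the two blocks. The MLP input is $\tilde{\bm{H}}^{k-1} = \bm{H}^{k-1} + \operatorname{MHA}(\cdot)$, so differentiating through both normalizations produces a product of two $1/\sigma$ factors, giving a cross term (MLP Jacobian)$\times$(MHA Jacobian). I would also use $\sigma_{\tilde{\bm{H}}^{k-1}} = \Theta(\sigma_{\bm{H}^{k}})$, which follows from the recursion in Lemma~\ref{thm:variance_bound} since each branch adds only $\Theta(1)$ variance relative to the stream. Multiplying the per-layer bounds over $k$ gives the stated product; the statement's $\sigma_{\bm{H}^{\ell}}$ inside the product should be read as $\sigma_{\bm{H}^{k}}$, or bounded by it using monotonicity of the variance.

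For the limit, take $\ell = L - c$, so the product has at most $c+1$ factors. By Lemma~\ref{thm:variance_bound} in its tight regime, $\sigma_{\bm{H}^{k}} \to \infty$ as $k \ge L-c$ and $L \to \infty$, so each factor tends to 1, and a finite product of factors tending to 1 tends to 1. The bound is $\ge 1$ trivially, which completes the limit claim.

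The main obstacle is making the constants $A$ and $B$ genuinely independent of depth and of the input scale. This needs a uniform Lipschitz bound on attention as a function of normalized inputs, since softmax Jacobians depend on the query/key products. I would first try bounding these using $\|\bm{\alpha}\|_\infty$ and the spectral norms of $\bm{W}_Q,\bm{W}_K$ over the compact normalized domain. I would also need the assumption that the weight norms are uniformly bounded across layers, which I would state explicitly.
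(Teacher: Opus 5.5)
Your argument is correct at the paper's level of rigor. Its outer skeleton matches the paper's: the chain rule through the residual stream, a per-layer factor $1 + A/\sigma + B/\sigma^{2}$, monotonicity of $\sigma_{\bm{H}^{k}}$ to replace each $\sigma_{\bm{H}^{k}}$ by $\sigma_{\bm{H}^{\ell}}$, and a product of boundedly many factors that each tend to $1$.

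The difference is in how you get the per-layer estimate. The paper does not derive it. It imports a bound for the MLP sublayer from the curse-of-depth analysis and one for the attention sublayer from a second reference, then combines them into explicit constants $A$ and $B$ in terms of $d$, $h$, $s$, $d_{\mathrm{head}}$ and the common weight scale $\sigma$. In the paper, $B$ collects a term from the cited attention bound, and the cross term between the two sublayer factors does not appear explicitly.

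You derive the estimate from scale invariance instead. The map $x\mapsto x/\lVert x\rVert$ has Jacobian norm $1/\lVert x\rVert$, and the branch after normalization has uniformly bounded Jacobian on a compact domain. So each sublayer contributes an $O(1/\lVert x\rVert)$ correction, and your $B$ is exactly the product of the two sublayer corrections. The comparability $\lVert\tilde{\bm{H}}\rVert = \Theta(\lVert\bm{H}\rVert)$ follows most cleanly from the attention output being bounded.

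What each route buys:
\begin{itemize}
\item Your route is self-contained, explains where the $1/\sigma$ decay comes from, and accounts for the cross term.
\item The paper's route gives explicit formulas for $A$ and $B$. It also gets uniformity across layers for free from its assumption that all weights share one variance. In your version that uniformity has to come from the explicit bound on weight norms and $\bm{\alpha}$ that you plan to impose, and your constants will also depend on the sequence length, as the paper's do through $s$.
\end{itemize}

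Two small things to commit to. First, take $\partial y/\partial\bm{H}^{L}$ to have norm at most $1$; the paper implicitly takes it to be the identity. Absorbing it into a general constant $C$ would make the limit $C$ rather than $1$. Second, as you note, $\sigma_{\bm{H}^{k}}\to\infty$ needs the lower-bound (tight) regime of Lemma~\ref{thm:variance_bound}. The paper relies on this too without saying so.
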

This conclusion implies that for a large \(L\), the gradient of deeper layers \(\bm{H}^\ell\), \emph{i.e.}, \(\left\| \frac{\partial y}{\partial \bm{H}^{\ell}} \right\|_2\) is dominated by \textbf{identity mapping}, thereby limiting the model’s expressivity and hindering its capability to learn meaningful transformations.
This conclusion is also consistent with the empirical results as shown in Figure~\ref{fig:layer_redundancy}(a), where deeper layers exhibit high cross-layer similarity.
A comprehensive proof and more empirical results are given in Appendix~\ref{appendix:proof} and \ref{appendix:empirical}, respectively.

This redundancy in depth makes it reasonable that, previous layer pruning methods delete the entire redundant blocks, \emph{i.e.}, \(\operatorname{MHA}^{\ell}\) or \(\operatorname{MLP}^{\ell}\)~\citep{sleb,shortgpt}.
Although these methods achieve promising acceleration as shown in Figure~\ref{fig:layer_redundancy}(d), pruning at such high granularity will inevitably remove some useful knowledge within those blocks, resulting in a substantial performance degradation.

\begin{figure*}[t]
    \centering
    \includegraphics[width=0.98\linewidth]{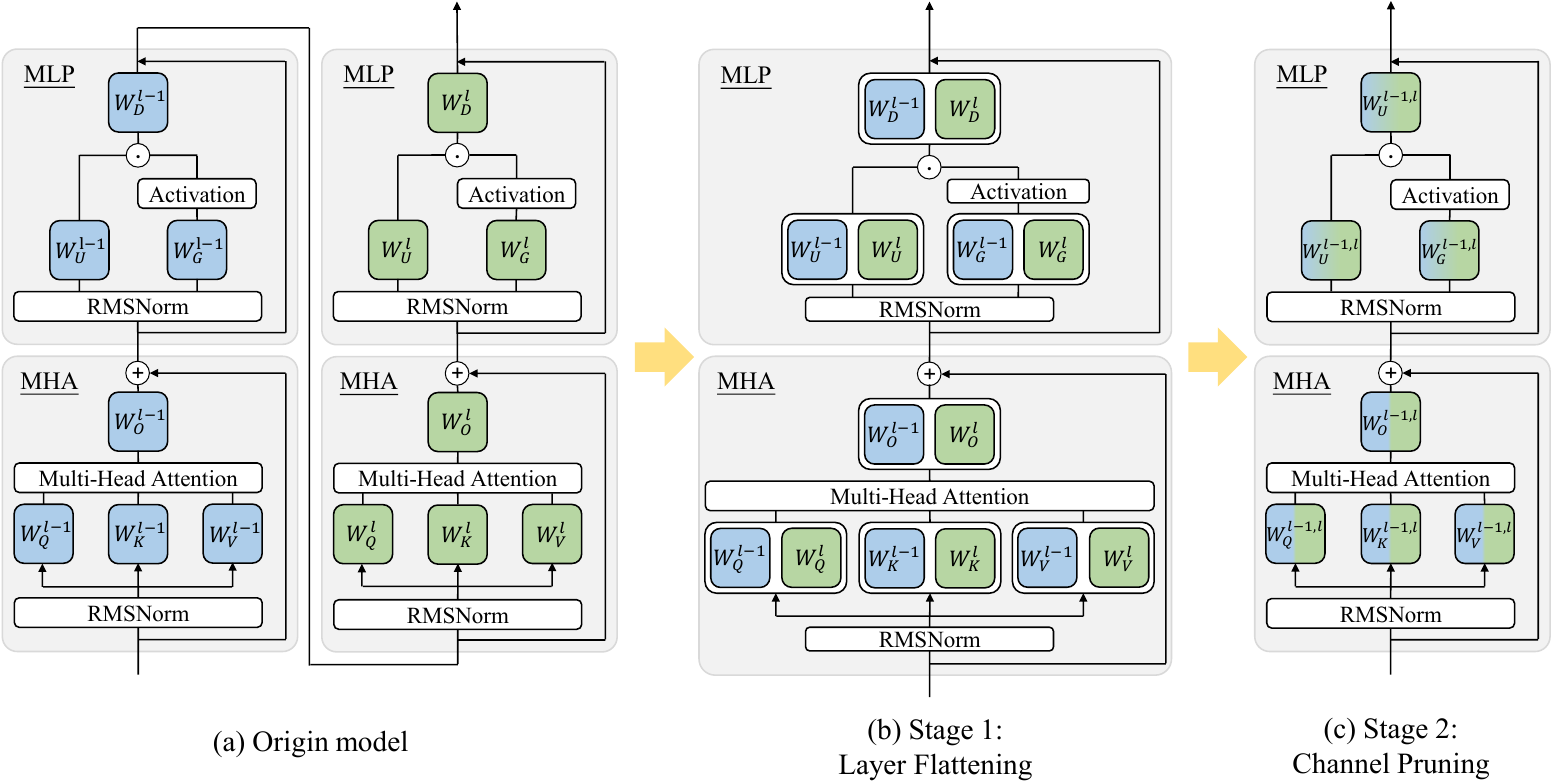}
    \caption{Framework of FlattenGPT, which consists of two stages. (a) Original stacks of transformer blocks with high similarity. (b) Layer flattening merges two adjacent blocks into one single block with a marginal performance degradation. (c) Flattening bridges the gap between depth compression and channel compression.}
    \label{fig:framework}
\end{figure*}

\section{FlattenGPT}
\label{sec:methods}

FlattenGPT strives for a fine-grained redundancy removal in depth compression.
As illustrated in Figure~\ref{fig:framework}, FlattenGPT employs a two-stage approach to compress the depth in a fine-grained manner.
In the first stage, FlattenGPT merges the selected adjacent layers into a single wide layer, flattening the arrangement of layers.
Due to the high similarity between adjacent layers, the flattening operation does not substantially alter the inner calculation, leading to a marginal performance degradation.
In the second stage, FlattenGPT adaptively prunes the redundant for the flattened layers, demonstrating less information loss compared with entire layer pruning methods.
FlattenGPT produces the same architecture as layer pruning, but preserves important cues from all layers.
It enjoys a faster acceleration in inference, meanwhile maintains high performance. Following parts present details of those two stages. 

\subsection{Iterative Layer Flattening}
Layer flattening aims to merge layers with high similarity.
Since the inputs of the two layers are highly similar, the inner calculation will not be significantly changed by the flattening, therefore preserving a constant performance.
We need to address two issues in this stage:
1) {how to select the layers to flatten}, and 2) {how to merge the selected layers}.

\noindent\textbf{Layer Selection} is conducted to spot highly similar adjacent layers. 
We collect cross-layer feature similarity on a small calibration dataset.
Then we design a greedy algorithm to find adjacent layers with the highest similarities iteratively.
The algorithm is shown in Algorithm~\ref{alg:iterative_layer_flattening}.
Let \(\bm{S}\in\mathbb{R}^{L\times L}\) denote the similarity matrix, where
\(\bm{S}_{i,j}\)
denote the cosine similarity between the input feature of layer \(i\) and the output feature of layer \(j\).
For simplicity, \(\bm{S}\) is an upper triangular matrix, where we only collect \(\bm{S}_{i,j}\) for \(i < j\).
We try to find the two adjacent layers \(\left\{B_{\ell-1}, B_{\ell}\right\}\) with the highest similarity \(\bm{S}_{\ell-1,\ell}\) for each iteration.
Then we need to modify the similarity matrix to \(\bm{S}\) for the next iteration.

Let layers \({B}_{\ell-1}\), \(\hat{B}_{\ell}\) are selected in the current iteration.
Suppose one of them has already been flattened in previous iterations, \emph{e.g.} \(\hat{B}_{\ell}\) is generated by merging \({B}_{m-1}\) and \({B}_{m}\). Merging \({B}_{\ell-1}\), \(\hat{B}_{\ell}\) is equivalent to flattening \({B}_{\ell-1}\), \({B}_{m-1}\), \({B}_{m}\), and thus requires special handling.
We need to consider the distance between the first and the last layer being flattened.
If the similarity is too low, flattening these layers will substantially alter information flow, potentially leading to performance degradation.
To address this issue, we modify the similarity matrix \(\bm{S}\) after each flattening iteration.
We delete the \(\ell - 1\)-th column and the \(\ell\)-th row in \(\bm{S}\) after merging layer \({B}_{\ell-1}\) and \(\hat{B}_{\ell}\).
Therefore, we can only access the similarities \(\bm{S}_{i,\ell}\) for \(i < \ell - 1\) and \(\bm{S}_{\ell-1,i}\) for \(i > \ell\).
It ensures that the similarity of \({B}_{\ell-1}\) and \({B}_{m}\) are counted for measuring the similarity between \({B}_{\ell-1}\) and \(\hat{B}_{\ell}\). Note that, this strategy is generalizable to merging either original or flattened layers.
The above steps are iterated until the target number of layers is flattened, which is decided by the predefined pruning ratio.

\begin{algorithm*}[t]
\footnotesize
\caption{{\color{RubineRed} \textbf{Iterative layer flattening}}}
\label{alg:iterative_layer_flattening}
\begin{algorithmic}[1]
\Require{Base model, number of layers to flatten \(N\), calibration set \(\mathcal{D}\)}
\State{Calculate the cosine similarity between each pair of layers \(\bm{S}\in\mathbb{R}^{L\times L}\)}, which is an upper triangle matrix
\While{\(N \ge 0\)}
\Comment{Iterative search}
\State{Identify the index \((\ell - 1, \ell)\) of the largest similarity in \(\bm{S}\)}
\Comment{Select adjacent layers}
\State{\(\tilde{\bm{W}}_{m}^{j} \leftarrow \operatorname{diag}\left(\bm{\alpha}_{a}^{j}\right)\bm{W}_{m}^{j}\), for \(m\in\{Q, K, V\}, j\in\{\ell-1, \ell\}\)}
\Comment{Fuse affinement parameters in MHA}
\State{\(\bm{W}_{m}^{\ell-1, \ell} \leftarrow \left[\tilde{\bm{W}}_{m}^{\ell-1}\ \tilde{\bm{W}}_{m}^{\ell}\right]\), for \(m\in\{Q, K, V\}\)}
\Comment{Flatten MHA}
\State{\(\bm{W}_{O}^{\ell-1, \ell} \leftarrow \left[\bm{W}_{O}^{\ell-1,\top}\ \bm{W}_{O}^{\ell,\top}\right]^{\top}\)}
\State{\(\tilde{\bm{W}}_{m}^{j} \leftarrow \operatorname{diag}\left(\bm{\alpha}_{p}^{j}\right)\bm{W}_{m}^{j}\), for \(m\in\{u, g\}, j\in\{\ell-1, \ell\}\)}
\Comment{Fuse affinement parameters in MLP}
\State{\(\bm{W}_{m}^{\ell-1, \ell} \leftarrow \left[\tilde{\bm{W}}_{m}^{\ell-1}\ \tilde{\bm{W}}_{m}^{\ell}\right]\), for \(m\in\{u, g\}\)}
\Comment{Flatten MLP}
\State{\(\bm{W}_{D}^{\ell-1, \ell} \leftarrow \left[\bm{W}_{D}^{\ell-1,\top}\ \bm{W}_{D}^{\ell,\top}\right]^{\top}\)}
\State{Delete the \(\ell\)-th row and the \(\ell-1\)-th column from the similarity matrix \(\bm{S}\)}
\Comment{Prepare for next iteration}
\State{\(N\gets N-1\)}
\EndWhile
\end{algorithmic}
\end{algorithm*}

\noindent\textbf{Layer Flattening:}
Flattening aims to merge the selected layers into a single wide layer.
To be compatible with the current implementation and AI infrastructure, we construct the flattened layers with the same architecture as the original transformers.
Let \(\ell-1\) and \(\ell\) to denote the indexes of layers being merged.
First, we fuse the parameters of affinement in the normalization layer \(\operatorname{LN}\) with the linear projections in the \(\operatorname{MHA}\) and \(\operatorname{MLP}\).
For the MHA layers, we fuse \(\bm{\alpha}_{a}^{\ell-1}\) with the query, key, and value matrix:
\begin{equation}
\begin{aligned}
    \tilde{\bm{W}}_{Q,i}^{\ell-1} = \operatorname{diag}\left(\bm{\alpha}_{a}^{\ell-1}\right)\bm{W}_{Q,i}^{\ell-1},\\
    \tilde{\bm{W}}_{K,i}^{\ell-1} = \operatorname{diag}\left(\bm{\alpha}_{a}^{\ell-1}\right)\bm{W}_{K,i}^{\ell-1},\\
    \tilde{\bm{W}}_{V,i}^{\ell-1} = \operatorname{diag}\left(\bm{\alpha}_{a}^{\ell-1}\right)\bm{W}_{V,i}^{\ell-1}.
\end{aligned}
\end{equation}
For the MLP layers, we fuse \(\bm{\alpha}_{p}^{\ell-1}\) with the up and gate matrix:
\begin{equation}
\begin{aligned}
    \tilde{\bm{W}}_{u}^{\ell-1} = \operatorname{diag}\left(\bm{\alpha}_{p}^{\ell-1}\right)\bm{W}_{u}^{\ell-1},\\
    \tilde{\bm{W}}_{g}^{\ell-1} = \operatorname{diag}\left(\bm{\alpha}_{p}^{\ell-1}\right)\bm{W}_{g}^{\ell-1}.
\end{aligned}
\end{equation}
Similar fuse operations are conducted on layer \(\ell\).
After these fusions, the affinement parameters are set to \(\bm{1}_{d}\).
This step does not change the output of the network but facilitates the next flattening steps.

\paragraph{\color{CornflowerBlue}{MHA flattening}}
Considering the high similarity between the selected layers, we add the output of the two MHA blocks in MHA flattening.
It is equivalent to concatenating the attention heads from two layers to form an attention block of \(2H\) heads:
\begin{equation}
\begin{aligned}
    &\operatorname{MHA}^{\ell-1,\ell}\left(\bm{X}\right)\\
    =&
    \sum_{i=1}^{H}
    \phi
    \left(
    \psi_{r}\left(\bm{X}\tilde{\bm{W}}_{Q,i}^{\ell-1}\right)
    \psi_{r}^{\top}\left(\bm{X}\tilde{\bm{W}}_{K,i}^{\ell-1}\right)
    \right)
    \bm{X}\tilde{\bm{W}}_{V,i}^{\ell-1}\bm{W}_{O,i}^{\ell-1}\\
    &+ \sum_{i=1}^{H}
    \phi
    \left(
    \psi_{r}\left(\bm{X}\tilde{\bm{W}}_{Q,i}^{\ell}\right)
    \psi_{r}^{\top}\left(\bm{X}\tilde{\bm{W}}_{K,i}^{\ell}\right)
    \right)
    \bm{X}\tilde{\bm{W}}_{V,i}^{\ell}\bm{W}_{O,i}^{\ell}.
\end{aligned}
\end{equation}
The flattened MHA block can also be calculated by the original implementation, which utilizes a single query, key, value, and output projection matrix:
\begin{equation}
\begin{aligned}
    \bm{W}_{Q}^{\ell-1,\ell}=&
    \left(
    \tilde{\bm{W}}_{Q,1}^{\ell-1},\ 
    \tilde{\bm{W}}_{Q,2}^{\ell-1},\ 
    \cdots,
    \tilde{\bm{W}}_{Q,H}^{\ell-1},\right.\\
    &\left.\tilde{\bm{W}}_{Q,1}^{\ell},\ 
    \tilde{\bm{W}}_{Q,2}^{\ell},\ 
    \cdots,
    \tilde{\bm{W}}_{Q,H}^{\ell}
    \right),
\end{aligned}
\end{equation}
where similar calculation applies to matrix \(\tilde{\bm{W}}_{K}^{\ell-1,\ell}\), \(\tilde{\bm{W}}_{V}^{\ell-1,\ell}\) and \(\tilde{\bm{W}}_{O}^{\ell-1,\ell}\).

\paragraph{\color{Green}MLP flattening}
Similarly, we add the output of two MLP blocks due to the high similarity across layers.
It is equivalent to concatenating the hidden states from two layers to form an MLP layer of \(2d_{int}\) hidden channels.
\begin{equation}
\begin{aligned}
    \operatorname{MLP}^{\ell-1,\ell}\left(\bm{X}\right) =&
    \bm{X}\tilde{\bm{W}}_{u}^{\ell-1}\psi_{g}\left(\bm{X}\tilde{\bm{W}}_{g}^{\ell-1}\right)\bm{W}_{D}^{\ell-1}\\
    + &\bm{X}\tilde{\bm{W}}_{u}^{\ell}\psi_{g}\left(\bm{X}\tilde{\bm{W}}_{g}^{\ell}\right)\bm{W}_{D}^{\ell}.
\end{aligned}
\end{equation}
The full procedure of layer flattening is shown in Algorithm~\ref{alg:iterative_layer_flattening}.
These steps flatten some adjacent layers, but do not change the number of parameters and calculations. Next, we introduce a pruning method to compress those parameters.

\begin{algorithm*}[t]
\footnotesize
\caption{{\color{CornflowerBlue} \textbf{MHA pruning}} by removing heads}
\label{alg:mha_pruning}
\begin{algorithmic}[1]
\Require{Query, key, value matrix \(\bm{W}_{Q}, \bm{W}_{K}, \bm{W}_{V}\in\mathbb{R}^{d\times d_{h}}\), output matrix \(\bm{W}_{O}\in\mathbb{R}^{d_{h}\times d}\), rank \(k\), calibration dataset \(\mathcal{D}\)}
\State{ \(f_i = \sum\limits_{i=1}^{N}\operatorname{Softmax}
    \left(
    \psi_{r}\left(\bm{X}\bm{W}_{Q,i}\right)
    \psi_{r}^{\top}\left(\bm{X}\bm{W}_{K,i}\right)
    \right)
    \bm{X}\bm{W}_{V,i}\) \(
    \operatorname{diag}\left(\bm{W}_{O,i}\bm{W}_{O,i}^{\top}\right)^{1/2}\), for \(i\in\left\{1, 2, \cdots, H\right\}\)}
\State{Let \(\bm{S}_{k}\in \mathbb{R}^{d\times k}\) be the matrix that selects the top \(k\) heads based on \(\left\{f_i\right\}\) scores}
\State{\Return \(\left(\bm{W}_{Q}, \bm{W}_{K}, \bm{W}_{V}, \bm{W}_{O}\right)\leftarrow\left(\bm{W}_{Q}\bm{S}_{k}, \bm{W}_{K}\bm{S}_{k}, \bm{W}_{V}\bm{S}_{k}, \bm{S}_{k}^{\top}\bm{W}_{O}\right)\)}
\end{algorithmic}
\end{algorithm*}

\begin{algorithm*}[t]
\footnotesize
\caption{{\color{Green} \textbf{MLP pruning}} by Nystr\"om approximation}
\label{alg:mlp_pruning}
\begin{algorithmic}[1]
\Require{Up and gate matrix \(\bm{W}_{u}, \bm{W}_{g}\in\mathbb{R}^{d\times d_{int}}\), down matrix \(\bm{W}_{D}\in\mathbb{R}^{d_{int}\times d}\), rank \(k\), calibration dataset \(\mathcal{D}\), ridge intensity \(\lambda\)}
\State{Calculate activation correlation \(\bm{C}_{\psi}=\sum_{i=1}^{N}\left(\bm{X}_{i}\bm{W}_{u}\psi_{g}\left(\bm{X}_{i}\bm{W}_{g}\right)\right)^{\top}\bm{X}_{i}\bm{W}_{u}\psi_{g}\left(\bm{X}_{i}\bm{W}_{g}\right)\)}
\State{\(s_{i}\leftarrow\left[\bm{C}_{\psi}\left(\bm{C}_{\psi} + \lambda\bm{I}\right)\right]^{-1}_{ii}\), for \(i\in\left\{1, 2, \cdots, d_{int}\right\}\)}
\Comment{Calculate the ridge leverage score}
\State{Let \(\bm{S}_{k}\in \mathbb{R}^{d_{int}\times k}\) be the matrix that selects the top \(k\) columns based on \(s_i\) scores}
\State{\Return \(\left(\bm{W}_{u}, \bm{W}_{g}, \bm{W}_{D}\right)\leftarrow\left(\bm{W}_{u}\bm{S}_{k}, \bm{W}_{g}\bm{S}_{k}, \bm{W}_{D} + \left(\bm{S}_{k}^{\top}\bm{C}_{\psi}\bm{S}_{k} + \lambda\bm{I}\right)^{-1}\bm{S}_{k}^{\top}\bm{C}_{\psi}\left(\bm{I} - \bm{S}_{k}\bm{S}_{k}^{\top}\right)\bm{W}_{D}\right)\)}
\end{algorithmic}
\end{algorithm*}

\subsection{Channel Pruning} 
Channel pruning is conducted based on layer flattening to remove redundant parameters.
Previous channel pruning methods are compatible with layer flattening, but lead to inconsistent architectures across layers. 
Different, this paper aims to keep the pruned architecture consistent with the original one.
This consistency will simplify the implementation, facilitating the reuse of tuning hyperparameters and deployment.

We employ two pruning methods for MHA and MLP blocks, respectively.
For the MHA blocks, we prune redundant heads to keep the number of heads and head size unchanged.
For the MLP blocks, we prune individual channels with nystr\"om approximation~\citep{nystrom,nystrom_sampling}.
We omit the layer index in the formulation to simplify the description.

\paragraph{\color{CornflowerBlue}{MHA pruning}}
Since the flattened layer contains more heads than the original layer, we aim to prune heads to keep the number of heads the same as the other MHA blocks.
We design a metric to compare the importance of head \(i\), i.e., the \(f_i\) is computed as
\begin{equation}
\begin{aligned}
    f_i = &\mathbb{E}_{\mathcal{D}}\left[\operatorname{Softmax}
    \left(
    \psi_{r}\left(\bm{X}\bm{W}_{Q,i}\right)
    \psi_{r}^{\top}\left(\bm{X}\bm{W}_{K,i}\right)
    \right)\right.\\
    &\left.\bm{X}\bm{W}_{V,i} \
    \operatorname{diag}\left(\bm{W}_{O,i}\bm{W}_{O,i}^{\top}\right)^{1/2}\right].
\end{aligned}
\end{equation}
This metric estimates the expectation norm of the attention activation value by multiplying the L2 norm of each line of the output matrix.
It measures the impact of the head \(i\) on the output.
By comparing the impact of each head, we can remove the unimportant heads to prune the MHA block.
The compression procedure is summarized in Algorithm~\ref{alg:mha_pruning}.

\paragraph{\color{Green}MLP pruning}
The MLP blocks conduct important non-linear calculations in the transformer.
A simple channel selection is insufficient to maintain the performance of the original model.
We employ Nystr\"om approximation~\citep{nystrom,nystrom_sampling} to prune this block.
The compression method is summarized in Algorithm~\ref{alg:mlp_pruning}.
First, we calculate the ridge leverage score~\citep{nystrom_sampling} as the channel importance measurement.
Then we select the important channels and adjust the down matrix \(\bm{W}_{D}\) to compensate for the information loss with Nystr\"om approximation~\citep{nystrom}.
The following lemma illustrates that Nystr\"om approximation is an optimal estimation under least squares with L2 regularization. The proof is shown in Appendix~\ref{appendix:proof3}.
\begin{lemma}
Let \(\bm{S}_{k}\) denote a \(k\)-column selection matrix. Let \(\bm{C}_{\psi}\) denote the covariance \(\sum_{i=1}^{N}\left(\psi_{s}\left(\bm{X}_{i}\bm{W}_{U}\right)\right)^{\top}\psi_{s}\left(\bm{X}_{i}\bm{W}_{U}\right)\). The optimal estimation of \(\hat{\bm{W}}_{D}\) is defined by:
\begin{equation}
\begin{aligned}
\Delta\hat{\bm{W}}_{D}
&= \operatorname*{\arg\min}_{\Delta\bm{W}_{D}} \left\lVert \psi_{s}\left(\bm{X}_{i}\bm{W}_{U}\right)\bm{S}_{k}\left(\bm{S}_{k}^{\top}\bm{W}_{D} + \Delta\bm{W}_{D}\right)\right.\\
&\left.- \psi_{s}\left(\bm{X}_{i}\bm{W}_{U}\right)\bm{W}_{D}
\right\rVert_{2}+\lambda\left\lVert\Delta\bm{W}_{D}\right\rVert_{2}
,\\
\hat{\bm{W}}_{D} &= \bm{W}_{D} + \bm{S}_{k}\Delta\hat{\bm{W}}_{D},
\end{aligned}
\label{eq:nystrom_approximation}
\end{equation}
where \(\lambda\) is the coefficient for L2 regularization.
\(\Delta\hat{\bm{W}}_{D}\) has closed form solution:
\begin{equation}
\Delta\hat{\bm{W}}_{D}
=\left(\bm{S}_{k}^{\top}\bm{C}_{\psi}\bm{S}_{k} + \lambda\bm{I}\right)^{-1}\bm{S}_{k}^{\top}\bm{C}_{\psi}\left(\bm{I} - \bm{S}_{k}\bm{S}_{k}^{\top}\right)\bm{W}_{D}.
\end{equation}
\label{thm:nystrom_approximation}
\end{lemma}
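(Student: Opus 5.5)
The plan is to read the objective in \eqref{eq:nystrom_approximation} as a ridge-regression problem. I will interpret both norms as squared Frobenius norms and sum the data term over the calibration samples \(i=1,\dots,N\). This is the reading under which \(\bm{C}_{\psi}\) appears and a closed form exists. Write \(\bm{\Phi}_i=\psi_{s}(\bm{X}_i\bm{W}_U)\), so that \(\bm{C}_{\psi}=\sum_i\bm{\Phi}_i^{\top}\bm{\Phi}_i\). The first step is an algebraic rewrite of the residual:
\begin{equation*}
\bm{\Phi}_i\bm{S}_k\left(\bm{S}_k^{\top}\bm{W}_D+\Delta\bm{W}_D\right)-\bm{\Phi}_i\bm{W}_D
=\bm{\Phi}_i\bm{S}_k\Delta\bm{W}_D-\bm{\Phi}_i\left(\bm{I}-\bm{S}_k\bm{S}_k^{\top}\right)\bm{W}_D .
\end{equation*}
This makes explicit that \(\Delta\bm{W}_D\) regresses the contribution of the discarded channels onto the kept ones. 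Write \(\bm{R}=(\bm{I}-\bm{S}_k\bm{S}_k^{\top})\bm{W}_D\) for the discarded part.

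Next I expand the objective
\begin{equation*}
J(\Delta)=\sum_i\lVert\bm{\Phi}_i\bm{S}_k\Delta-\bm{\Phi}_i\bm{R}\rVert_F^2+\lambda\lVert\Delta\rVert_F^2
\end{equation*}
using traces. This gives a quadratic form
\begin{equation*}
J(\Delta)=\operatorname{tr}\!\left(\Delta^{\top}\left(\bm{S}_k^{\top}\bm{C}_{\psi}\bm{S}_k+\lambda\bm{I}\right)\Delta\right)-2\operatorname{tr}\!\left(\Delta^{\top}\bm{S}_k^{\top}\bm{C}_{\psi}\bm{R}\right)+\text{const}.
\end{equation*}
Since \(\bm{C}_{\psi}\succeq 0\), the matrix \(\bm{S}_k^{\top}\bm{C}_{\psi}\bm{S}_k+\lambda\bm{I}\) is positive definite for \(\lambda>0\). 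So \(J\) is strictly convex, and its unique minimizer is the stationary point.

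Setting the gradient to zero gives
\begin{equation*}
\left(\bm{S}_k^{\top}\bm{C}_{\psi}\bm{S}_k+\lambda\bm{I}\right)\Delta=\bm{S}_k^{\top}\bm{C}_{\psi}\left(\bm{I}-\bm{S}_k\bm{S}_k^{\top}\right)\bm{W}_D .
\end{equation*}
Inverting yields the stated closed form. Finally, I will check consistency with the second line of \eqref{eq:nystrom_approximation}, which embeds the pruned down-projection back into \(d_{int}\) rows. Using \(\bm{S}_k^{\top}\bm{S}_k=\bm{I}\), one has \(\bm{S}_k^{\top}\hat{\bm{W}}_D=\bm{S}_k^{\top}\bm{W}_D+\Delta\hat{\bm{W}}_D\), which is exactly the compressed matrix that multiplies the kept activations. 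This also matches the update returned in Algorithm~\ref{alg:mlp_pruning}.

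The main obstacle is not the calculus but fixing the intended reading of the statement. As written, the norms are unsquared \(\lVert\cdot\rVert_2\), and the index \(i\) is free. With unsquared norms the minimizer has no closed form, so I will state explicitly that the lemma is understood with squared Frobenius norms summed over \(i\). I will also note that \(\bm{S}_k\) has orthonormal columns, which is what makes the identity \(\bm{S}_k^{\top}\bm{S}_k=\bm{I}\) in the last step hold.
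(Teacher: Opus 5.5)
Your proposal is correct and follows essentially the same route as the paper. The paper also casts the problem as ridge regression with design $\psi_{s}(\bm{X}_{i}\bm{W}_{U})\bm{S}_{k}$ and target $\psi_{s}(\bm{X}_{i}\bm{W}_{U})(\bm{I}-\bm{S}_{k}\bm{S}_{k}^{\top})\bm{W}_{D}$, and solves the normal equations obtained by setting the gradient to zero. Your write-up is, if anything, more careful than the paper's: it states the squared-norm, summed-over-$i$ reading that the paper only uses implicitly, handles the matrix-valued unknown directly via traces, and justifies uniqueness through positive definiteness of $\bm{S}_{k}^{\top}\bm{C}_{\psi}\bm{S}_{k}+\lambda\bm{I}$.
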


\subsection{Pruning Hyperparameters}
All architectural hyperparameters, including width/head count, are predefined to preserve structure consistency with the original transformer block.
The target width of the pruned MLP is identical to the original MLP module, and the number of heads is the same as the original MHA module.
This setting ensures compatibility with the original AI infrastructure, including GPUs, CUDA kernels, multi-machine communication, inference engine, etc.
It is also a clear target for reproducibility.

\begin{table*}[t]
\scriptsize
\caption{Comparison of depth compression methods on WikiText-2 perplexity and zero-shot tasks.}
\label{tab:depth_compression_comparison}
\renewcommand{\arraystretch}{0.9}
\setlength{\tabcolsep}{8pt}
\centering
\begin{tabular}{l | l | c | c | c  c  c  c  c | c}
\toprule
\rowcolor{gray!20}
\bf Model & \bf Method & \bf Sparsity	& \bf PPL \(\downarrow\)	& \bf WinoG	& \bf HellaS	& \bf PIQA	& \bf ARC-e	& \bf ARC-c	& \bf Avg.  \\
\midrule
\midrule
\multirow{7.5}{*}{LLaMA-2 7B}	& Dense	& 0\%		& 5.47	& 69.06	& 75.99	& 79.11	& 74.58	& 46.25	& 69.00	\\
\cmidrule{2-10}
& SLEB~\citep{sleb}					& 21.02\%	& 9.14	& 58.96	& 62.47	& 73.07	& 56.48	& 33.02	& 56.80	\\
& LaCo~\citep{laco}					& 21.02\%	& 50.39	& 60.46	& 54.08	& 68.34	& 55.39	& 35.84	& 54.82	\\
& RM~\citep{rm}						& 21.02\%	& 676.8	& 49.25	& 29.22	& 54.46	& 34.43	& 22.53	& 37.98	\\
& ShortGPT~\citep{shortgpt}			& 21.02\%	& 18.45	& 65.90	& 62.63	& 70.24	& 56.06	& 36.09	& 58.18	\\
& BlockPruner~\citep{blockpruner}	& 21.99\%	& 11.51	& 62.43	& 65.87	& 74.21	& 61.07	& 37.29	& 60.17	\\
\rowcolor{cyan!10}
\cellcolor{white}	& FlattenGPT	& 21.02\%	& 8.68	& 66.54	& 68.45	& 72.74	& 63.43	& 41.30	& \bf 62.49	\\
\midrule
\midrule
\multirow{6.5}{*}{LLaMA-2 13B}& Dense	& 0\%		& 4.88	& 72.22	& 79.39	& 80.47	& 77.48	& 49.23	& 71.76	\\
\cmidrule{2-10}
& LaCo~\citep{laco}	& 24.37\%		& 13.97	& 59.27	& 60.44	& 72.42	& 54.34	& 34.56	& 56.21	\\
& RM~\citep{rm}						& 24.37\%	& 10.08	& 66.61	& 66.80	& 73.72	& 66.12	& 41.98	& 63.05	\\
& ShortGPT~\citep{shortgpt}			& 24.37\%	& 20.06	& 70.80	& 67.80	& 72.74	& 60.35	& 41.30	& 62.60	\\
& BlockPruner~\citep{blockpruner}	& 25.12\%	& 8.16	& 66.30	& 72.20	& 76.93	& 65.82	& 41.38	& 64.53	\\
\rowcolor{cyan!10}
\cellcolor{white}	& FlattenGPT	& 24.37\%	& 6.68	& 71.11	& 73.44	& 76.33	& 72.10	& 44.54	& \bf 67.50	\\
\midrule
\midrule
\multirow{6.5}{*}{Qwen-1.5 7B}& Dense	& 0\%		& 7.95	& 66.46	& 76.92	& 79.22	& 62.16	& 42.66	& 65.48	\\
\cmidrule{2-10}
& LaCo~\citep{laco}					& 20.97\%	& 39.23	& 58.64	& 56.35	& 70.40	& 46.89	& 32.85	& 53.03	\\
& RM~\citep{rm}						& 20.97\%	& 2026	& 49.88	& 42.00	& 67.36	& 54.17	& 28.58	& 48.40	\\
& ShortGPT~\citep{shortgpt}			& 20.97\%	& 49.88	& 62.12	& 58.87	& 69.53	& 43.60	& 32.17	& 53.26	\\
& BlockPruner~\citep{blockpruner}	& 21.83\%	& 20.58	& 55.56	& 59.31	& 71.71	& 53.70	& 33.28	& 54.71	\\
\rowcolor{cyan!10}
\cellcolor{white}	& FlattenGPT	& 20.97\%	& 16.05	& 59.27	& 62.89	& 68.39	& 56.99	& 37.46	& \bf 57.00	\\
\midrule
\midrule
\multirow{6}{*}{Baichuan-2 7B}	& Dense	& 0\%		& 6.04	& 68.27	& 72.18	& 77.48	& 72.98	& 42.75	& 66.73	\\
\cmidrule{2-10}
& LaCo~\citep{laco}						& 21.57\%	& 26.46	& 58.56	& 51.50	& 68.28	& 52.90	& 28.50	& 51.95	\\
& RM~\citep{rm}							& 21.57\%	& 189.8	& 52.33	& 30.87	& 59.96	& 38.17	& 23.63	& 40.99	\\
& ShortGPT~\citep{shortgpt}				& 21.57\%	& 31.05	& 62.67	& 50.01	& 63.71	& 47.31	& 30.72	& 50.88	\\
& BlockPruner~\citep{blockpruner}		& 22.45\%	& \bf 15.38	& 61.48	& 58.09	& 69.75	& 58.08	& 33.02	& 56.08	\\
\rowcolor{cyan!10}
\cellcolor{white}		& FlattenGPT	& 21.57\%	& 20.55	& 64.33	& 61.50	& 69.42	& 56.27	& 35.24	& \bf 57.35	\\
\bottomrule
\end{tabular}
\end{table*}

\section{Experiments}
\label{sec:experiments}

\subsection{Experimental Setup}
\noindent\textbf{Models:}
We evaluate FlattenGPT on models that employ a sequential transformer block structure: LLaMA-2~\citep{llama_v2}, LLaMA-3~\citep{llama_v3}, Qwen-1.5~\citep{qwen}, and Baichuan-2~\citep{baichuan2}, etc.
These models share similar architectures of MHA and MLP.

\noindent\textbf{Implementations and Environments:}
All hyperparameters, including width/head count, are predefined to preserve structure consistency with the original transformer block.
We implement our models using the HuggingFace Transformers library~\citep{huggingface-transformers}.

\noindent\textbf{Datasets and Evaluations:}
We follow the setup in previous works~\citep{slicegpt,sleb} for fairness.
The calibration dataset is composed of 128 samples with 2048 tokens, randomly selected from the training split of WikiText-2~\citep{wikitext2}.
The evaluation consists of perplexity and zero-shot task performance.
The perplexity is evaluated on the test split of WikiText-2~\citep{wikitext2}.
The zero-shot accuracies are evaluated with LM Evaluation Harness~\citep{eval-harness} on Winograd~\citep{winogrande}, HellaSwag~\citep{hellaswag}, Physical Interaction Question Answering (PIQA)~\citep{piqa}, and AI2 Reasoning Challenges (ARC-e, ARC-c)~\citep{arc}.
We also investigate the effectiveness of recovery finetuning, which employs 50K samples of refined Alpaca~\citep{alpaca} for instruction tuning with LoRA~\citep{lora}.
More details are presented in Appendix~\ref{appendix:implementation}.


\subsection{Comparison with Depth Compression Methods}
Table~\ref{tab:depth_compression_comparison} shows a comprehensive comparison between FlattenGPT and the other depth compression methods.
These methods~\citep{laco,sleb,shortgpt,rm,blockpruner} remove the entire transformer blocks, resulting in substantial performance loss.
Our method alleviates this problem by fine-grained parameter removal and shows superior performance on various model sizes.
It achieves the highest zero-shot accuracy among training-free layer pruning competitors, and outperforms ShortGPT by about 5\% on various models.

\begin{table*}[t]
\centering
\scriptsize
\begin{minipage}[t!]{0.55\linewidth}
\centering
\caption{Comparison of pruning methods on throughput, latency, and mean accuracies on zero-shot tasks. Throughput and latency are measured with LLaMA-2 70B on 2 NVIDIA A100 80GB.}
\label{tab:latency_throughtput}
\renewcommand{\arraystretch}{0.9}
\setlength{\tabcolsep}{3pt}
\begin{tabular}{c|c|c|c|c|ccc}
\toprule
\rowcolor{gray!20}
\multicolumn{2}{c|}{}	& ~	& \bf Throughput	& \bf Latency	& \multicolumn{3}{c}{\bf LLaMA-2}	\\
\rowcolor{gray!20}
\multicolumn{2}{c|}{\multirow{-2}{*}{\bf Method}}	& \multirow{-2}{*}{\bf Sparsity}	& \bf (Tokens/s)	& \bf (ms)	& \bf 7B	& \bf 13B	& \bf 70B	\\
\midrule
\midrule
\multicolumn{2}{c|}{Dense}			& 0\%	& 299\(_{1.00\times}\)	& 1718.4\(_{1.00\times}\)	& 69.00	& 71.76	& 76.57	\\
\midrule
\multirow{2}{*}{2:4}	& SparseGPT	& 50\%	& 293{\color{red}\(_{0.98\times}\)}	& 1555.5{\color{Green}\(_{1.10\times}\)}	& 58.23	& 63.06	& 71.87	\\
& Wanda								& 50\%	& 293{\color{red}\(_{0.98\times}\)}	& 1555.5{\color{Green}\(_{1.10\times}\)}	& 55.59	& 61.23	& 72.34	\\
\midrule
\multirow{4}{*}{Width} & LLM-Pruner	& 20\%	& 314{\color{Green}\(_{1.05\times}\)}	& 1534.3{\color{Green}\(_{1.12\times}\)}	& 62.15	& 67.72	& -		\\
& SliceGPT							& 20\%	& 314{\color{Green}\(_{1.05\times}\)}	& 1658.7{\color{Green}\(_{1.04\times}\)}	& 58.17	& 63.45	& 72.34	\\
& SliceGPT							& 25\%	& 331{\color{Green}\(_{1.11\times}\)}	& 1440.7{\color{Green}\(_{1.19\times}\)}	& 55.49	& 58.90	& 69.75	\\
& SliceGPT							& 30\%	& 343{\color{Green}\(_{1.15\times}\)}	& 1364.2{\color{Green}\(_{1.26\times}\)}	& 51.50	& 55.16	& 66.11	\\
\midrule
& SLEB		& 10\%	& 336{\color{Green}\(_{1.12\times}\)}	& 1529.1{\color{Green}\(_{1.12\times}\)}	& 62.24	& 66.77	& 73.14 \\
& SLEB		& 20\%	& 381{\color{Green}\(_{1.27\times}\)}	& 1364.1{\color{Green}\(_{1.26\times}\)}	& 56.80	& 62.96	& 70.81 \\
\rowcolor{cyan!10}
\multirow{-3}{*}{Depth}& FlattenGPT	& 20\%	& \bf 381{\color{Green}\(_{1.27\times}\)}	& \bf 1364.1{\color{Green}\(_{1.26\times}\)}	& \bf 62.49	& \bf 68.27	& \bf 73.94 \\
\bottomrule
\end{tabular}
\end{minipage}
\hspace{1mm}
\begin{minipage}[t!]{0.43\linewidth}
\centering
\caption{Comparison of mean zero-shot accuracies with recovery fine-tuning. The sparsity ratio is 20\% and $^\dagger$ indicates fine-tuned on Alpaca~\citep{alpaca} dataset.}
\label{tab:zeroshot_finetuned}
\renewcommand{\arraystretch}{0.94}
\setlength{\tabcolsep}{3pt}
\begin{tabular}{c|l|c|c|c}
\toprule
\rowcolor{gray!20}
\multicolumn{2}{c|}{}	& \bf LLaMA-2 	& \bf LLaMA-2  & \bf LLaMA-3   \\
\rowcolor{gray!20}
\multicolumn{2}{c|}{\multirow{-2}{*}{\bf Method}}	& \bf 7B  & \bf 13B  & \bf 8B  \\
\midrule
\multicolumn{2}{c|}{Dense}							& 69.00	& 71.76	& 73.08	\\
\midrule
\multirow{4}{*}{Width}	& Wanda-sp	& 64.53	& 67.37	& -		\\
& FLAP								& 59.51	& 64.70	& 36.03	\\
& LLM-Pruner						& 61.34	& 65.66	& 64.23	\\
& LLM-Pruner$^\dagger$				& 62.15	& 67.72	& 68.99	\\
\midrule
\multirow{5}{*}{Depth}	& SLEB		& 59.25	& 62.96	& -		\\
& Shortened LLaMA 					& 58.36	& 65.86	& 58.30	\\
& Shortened LLaMA$^\dagger$ 		& 61.91	& 68.81	& 66.72	\\
\rowcolor{cyan!10}
\cellcolor{white}	& FlattenGPT			& 63.83	& 68.27	& 66.21	\\
\rowcolor{cyan!10}
\cellcolor{white}	& FlattenGPT$^\dagger$	& \bf 66.24	& \bf 70.53	& \bf 70.43	\\
\bottomrule
\end{tabular}
\end{minipage}
\end{table*}

\subsection{Comparison with other Pruning Methods}
Table~\ref{tab:latency_throughtput} compares the latency, throughput, and mean accuracies on zero-shot tasks of the compressed LLaMA-2~\citep{llama_v2} models.
Even accompanied with hardware acceleration as reported in~\cite{sleb}, 2:4 pruning methods lead to minor speedup (\(1.10\times\)) and lower throughput (\(0.98\times\) with a sparsity ratio of 50\%.
Width pruning methods, such as SliceGPT~\citep{slicegpt}, are more hardware-friendly and speed up the pruned model, while still lagging behind depth pruning methods.
FlattenGPT inherits the advantages of acceleration in depth pruning and improves the performance.
Since the compressed model architecture of FlattenGPT is exactly the same as SLEB, the throughput and latency results are the same.
FlattenGPT outperforms all other methods in throughput (\(1.27\times\)), latency (\(1.26\times\)), and zero-shot tasks performance (about 5\% higher), yielding a better trade-off between speed and performance.

\subsection{Recovery Fine-Tuning}
The pruned model of FlattenGPT contains the useful information from all blocks, making it easier for the model to recover performance through Recovery Fine-Tuning (RFT).
Table~\ref{tab:zeroshot_finetuned} presents the mean accuracies of zero-shot tasks with and without RFT.
The results show that the model compressed by FlattenGPT maintains \(>96\%\) zero-shot performance of the dense model, substantially better than other depth compression methods and some width pruning methods.
Even without RFT, our method achieves comparable performance with RFT-based methods.
These results illustrate the effectiveness of FlattenGPT.

\noindent\textbf{More experiments}: We present more experiments and discussions in Appendix~\ref{appendix:experiments}, including additional experiments on various model types and sizes, more pruning methods~\citep{llm_surgeon,modegpt}, ablation studies, dependency on calibration dataset, and generalization beyond language modeling and transformer architectures.
Please kindly refer to this part.

\section{Related Works}
\label{sec:related_works}

\noindent\textbf{Model Pruning} is an approach to compress the number of parameters and calculations in a deep model.
Unstructured pruning~\citep{obd,obs,llm_surgeon,layer_wise_obs,obc,sparsegpt,wanda,DSnoT} removes independent weights without pre-determined patterns, leading to sparse weight matrices within the model.
This sparsity enables high pruning ratio but results in complex data access patterns, which are not conducive to hardware acceleration.
Structured pruning~\citep{slicegpt,llm_pruner,flap} removes elements to form dense matrices that are more efficiently processed by hardware.
These methods exhibit a remarkable acceleration but come with worse performance degradation.
This paper follows the structured pruning of LLMs, proposing a new depth compression method that balances performance and efficiency.

\noindent\textbf{Depth compression} aims to reduce the number of layers and speed up inference.
Layer pruning approaches use layer importance metrics to remove redundant layers from the model~\citep{shortgpt,rm,shortenedllama,sleb,blockpruner,finercut}.
These methods remove the weights and knowledge of the entire layer, limiting the performance of the pruned model.
Layer merging methods fuse the parameters of different layers by addition~\citep{laco,mka,slm}.
While this type of method uses information from different layers, simple addition can cause sharp performance degradation.
LLM-Streamline~\citep{llmstreamline}
This paper proposes FlattenGPT, a novel depth compression method that rearranges the layers, which reduces the model depth while retaining the information of each layer and maintains the performance well.

\noindent\textbf{Width Compression} reduces the number of parameters by reducing the width of the network.
LLM-Pruner~\citep{llm_pruner} uses gradient magnitudes to estimate the importance of neurons and efficiently fine-tune the performance of the recovery model with parameters.
SliceGPT~\citep{slicegpt} and ModeGPT~\citep{modegpt} employ matrix decomposition to compress the width of each block in the Transformer.
The attention head pruning and sharing methods~\citep{michel2019sixteen} were used to reduce the width of the attention module.
However, these methods cannot compress model depth, leading to higher inference latency.
Our method bridges the gap between channel pruning and layer pruning, which provides a fine-grained layer pruning method and improves the performance.

\section{Conclusion}
We propose a novel LLM depth compression method, FlattenGPT, to address the challenges of performance degradation under high-granularity layer pruning.
Upon the high similarity of cross-layer input features, we design a layer flattening operation to reduce the model depth with minimal performance loss.
Then we adopt channel pruning methods to reduce the number of parameters and calculations in the model.
Our proposed method performs well on LLM depth compression, showcasing the effectiveness of fine-grained depth compression.
We hope this work can inspire more future efforts in depth compression on neural architectures from the perspective of layer flattening.








\section*{Impact Statement}
This paper presents work whose goal is to advance the field of machine learning. There are many potential societal consequences of our work, none of which we feel must be specifically highlighted here.





\bibliography{example_paper}
\bibliographystyle{icml2026}

\newpage
\appendix
\onecolumn
\section{Proofs}
\label{appendix:proof}

\subsection{Proof of Lemma~\ref{thm:variance_bound}}
\begin{proof}
Let $\sigma_{\text{Attn}}$ denote the standard deviation of
$\operatorname{Attn}\left(\operatorname{LN}_{a}^{\ell}\left(\bm{H}^{\ell}\right)\right)$,
and $\sigma_{\text{MLP}}$ denote the standard deviation of
$\operatorname{MLP}\left(\operatorname{LN}_{p}^{\ell}\left(\bm{H}^{\ell}\right)\right)$.
Given \eqref{eq:transformer} we have:
\begin{equation}
\begin{aligned}
\operatorname{Var}\left(\tilde{\bm{H}}^{\ell}\right)
&= \operatorname{Var}\left(\bm{H}^{\ell}\right) + \operatorname{Var}\left(\operatorname{Attn}\left(\operatorname{LN}\left(\bm{H}^{\ell}\right)\right)\right) + 2\operatorname{Cov}\left(\operatorname{Attn}\left(\operatorname{LN}\left(\bm{H}^{\ell}\right)\right),\bm{H}^{\ell}\right) \\
&=\sigma_{\bm{H}^{\ell}}^2 + \sigma_{\text{Attn}}^{2} + 2\rho_1 \cdot \sigma_{\bm{H}^{\ell}} \cdot  \sigma_{\text{Attn}},
\end{aligned}
\end{equation}
\begin{equation}
\begin{aligned}
\operatorname{Var}\left(\bm{H}^{\ell+1}\right)
&= \operatorname{Var}\left(\tilde{\bm{H}}^{\ell}\right) + \operatorname{Var}\left(\operatorname{MLP}\left(\operatorname{LN}\left(\tilde{\bm{H}}^{\ell}\right)\right)\right) + 2\operatorname{Cov}\left(\operatorname{MLP}\left(\operatorname{LN}\left(\tilde{\bm{H}}^{\ell}\right)\right),\tilde{\bm{H}}^{\ell}\right) \\
&=\sigma_{\tilde{\bm{H}}^{\ell}}^2 + \sigma_{\operatorname{MLP}}^{2} +  2\rho_2 \cdot \sigma_{\tilde{\bm{H}}^{\ell}} \cdot \sigma_{\operatorname{MLP}},
\end{aligned}
\end{equation}

where $\rho_1$, $\rho_2$ is the correlation factor.
Thus, the evolve from $\operatorname{Var}\left(\bm{H}^{\ell+1}\right)$ to $\operatorname{Var}\left(\bm{H}^{\ell}\right)$ becomes
\begin{equation}
\sigma_{\bm{H}^{\ell+1}}^{2} =\sigma_{\bm{H}^{\ell}}^{2} + \sigma_{\text{Attn}}^2 + \sigma_{\operatorname{MLP}}^2 + 2\rho_1 \cdot \sigma_{\bm{H}^{\ell}} \cdot  \sigma_{\text{Attn}} + 2\rho_2 \cdot \sigma_{\tilde{\bm{H}}^{\ell}} \cdot  \sigma_{\operatorname{MLP}}.
\label{eq:layer_variance}
\end{equation}

First, we deal with the variance of attention.
Let $n$ denote the number of keys/values.
Following results in \citet{curse_of_depth}, based on the independent distribution assumption of weights, we have
\begin{equation}
\operatorname{Var}\left(\operatorname{Attn}\left(\bm{Q}, \bm{K}, \bm{V}\right)\right)
\sim \frac{1}{n}\sum_{i=1}^{n}\operatorname{Var}\left(\bm{V}_{i}\right)
=\frac{1}{n}\cdot n\cdot \sigma_{\bm{V}}^2
=\sigma_{\bm{V}}^{2}
=\sigma_{\bm{W}}^{2}.
\end{equation}
Then the variance of the attention module with residual connection is denoted as
\begin{equation}
\begin{aligned}
\sigma^2_{\tilde{\bm{H}}^{\ell}}
&=\sigma_{\bm{H}^{\ell}}^2 + \sigma_{\bm{W}}^{2} + 2\rho_{1} \cdot \sigma_{\bm{H}^{\ell}} \cdot \sigma_{\bm{W}} \\
&= \sigma_{\bm{H}^{\ell}}^2 \left( 1+ \frac{\sigma_{\bm{W}}^2}{\sigma_{\bm{H}^{\ell}}^2}+ 2\rho_{1} \cdot \frac{\sigma_{\bm{W}}}{\sigma_{\bm{H}^{\ell}}}\right)\\
& = \sigma_{\bm{H}^{\ell}}^2 \Theta\left(\left(1 + \frac{\sigma_{\bm{W}}}{\sigma_{\bm{H}^{\ell}}}\right)^2\right).
\label{prime_or_not}
\end{aligned}
\end{equation}
Second, we solve the variance of MLP. Using the conclusion obtained by \citet{deepnet}, we get
\begin{equation}
    \sigma_{\operatorname{MLP}}^{2}\sim \sigma_{\bm{W}_U}^2\cdot \sigma_{\bm{W}_D}^2 = \sigma_{\bm{W}}^4.
\end{equation}
Substitute $
\sigma_{\tilde{\bm{H}}_{\ell}} = \sigma_{\bm{H}^{\ell}} \Theta\left( 1 + \frac{\sigma_{\bm{W}}}{\sigma_{\bm{H}^{\ell}}}\right)
$, we can obtain the variance of \eqref{eq:layer_variance}:
\begin{equation}
\begin{aligned}
\sigma^2_{\bm{H}^{\ell+1}} &=\sigma^2_{\bm{H}^{\ell}} +  \sigma_{\bm{W}}^2 +  \sigma_{\bm{W}}^4 + 2\rho_1 \cdot \sigma_{\bm{H}^{\ell}} \cdot \sigma_{\bm{W}} + 2\rho_2 \cdot \sigma_{\tilde{\bm{H}}^{\ell}} \cdot  \sigma_{\bm{W}}^2\\
& = \sigma^2_{\bm{H}^{\ell}} + \sigma_{\bm{W}}^2 + \sigma_{\bm{W}}^4 + 2\rho_1 \cdot \sigma_{\bm{H}^{\ell}} \cdot \sigma_{\bm{W}} + 2\rho_2 \cdot \sigma_{\bm{W}}^2 \cdot \sigma_{\bm{H}^{\ell}} \Theta\left(1 + \frac{\sigma_{\bm{W}}}{\sigma_{\bm{H}^{\ell}}}\right) \\
& = \sigma_{\bm{H}^{\ell}}^2 + \Theta\left( \sigma_{\bm{H}^{\ell}}\right).
\end{aligned}
\label{eq:variance_iteration}
\end{equation}
We set the numerator part to 1 in Big-Theta notation for simplicity. The variance with regard to \(\sigma_{\bm{H}}^{2}\) can be obtained by iteratively applying \eqref{eq:variance_iteration}:
\begin{equation}
\sigma_{\bm{H}^{\ell}}^{2}
= \sigma_{\bm{H}^{0}}^2 + \sum_{k=1}^{\ell} \Theta\left( \sigma_{\bm{H}^{k}}\right).
\end{equation}
To obtain the upper bound of the variance, we will show that if there is a constant number \(c\) such that \(c > 0\) and \(\rho_1 > c\), \(\rho_2 > c\), the variance grows quadratically with the depth \(\ell\). We use mathematical induction to prove the bound of the variance. Let \(\sigma_{\bm{H}^{k}}^{2} = \Theta\left(k^2\right)\), then
\begin{equation}
\sigma_{\bm{H}^{k + 1}}^{2}
= \sigma_{\bm{H}^{k}}^2 + \Theta\left( \sigma_{\bm{H}^{k}}\right)
= \Theta\left(k^2\right) + \Theta\left(k\right)
= \Theta\left((k + 1) ^ 2\right).
\end{equation}
If \(\rho_1=\rho_2=1\), the variance grows the most quickly, and it also falls in \(\Theta\left((k + 1) ^ 2\right)\).

\end{proof}
This lemma indicates that the variance of the Pre-Norm Transformer could increase with multiple residual connections, leading to a quadratic increase with model depth.
Thus, the prevailing transformers are potentially non-stable and result in redundancy as the layer goes deeper.

\subsection{Proof of Lemma~\ref{thm:gradient_bound}}
\label{appendix:proof2}
\begin{proof}
For an $L$-layered Pre-LN Transformer, the partial gradient to the $\ell$-th hidden states is given by the chain rule:
\begin{equation}
\frac{\partial y}{\partial \bm{H}^{\ell}} = \prod_{k=\ell}^{L-1} \left( \frac{\partial \bm{H}^{k+1}}{\partial \tilde{\bm{H}}^{k}} \cdot \frac{\partial \tilde{\bm{H}}^{k}}{\partial \bm{H}^{k}} \right).
\end{equation}
From \cite{curse_of_depth}, we know that
\begin{equation}
\frac{\partial \bm{H}^{k+1}}{\partial \tilde{\bm{H}}^{k}}
\le
1 + \frac{\sigma_{\bm{W}_{U}^{\ell}} \sigma_{\bm{W}_{D}^{\ell}}}{\sigma_{ \tilde{\bm{H}}^{k}} (\sqrt{d} + \sqrt{d_\mathrm{MLP}})^2}
= 1 + \frac{\sigma^2_{\ell}}{\sigma_{\tilde{\bm{H}}^{k}} (\sqrt{d} + \sqrt{d_\mathrm{MLP}})^2}. \label{pre_LN_FFN_result}
\end{equation}
From \cite{papaspiliopoulos2020high}, we get
\begin{equation}
\frac{\partial \tilde{\bm{H}}^{k}}{\partial \bm{H}^{k}}
\le
\left( 1 + 2dh\left( \sqrt{s} + 2 + \frac{1}{\sqrt{s}} \right) \frac{\sigma^2}{\sigma_{\bm{H}^{k}} }  \left(\sigma^2 d \sqrt{ d_{\mathrm{head}}} + \left( 1+ \sqrt{d_{\mathrm{head}}/d} \right) \right) \right),
\end{equation}
where $h$ denotes the number of heads and $s$ denotes the sequence length, respectively.
Following the proof from \cite{curse_of_depth}, the target equation can be expressed as
\begin{equation}
\left\lVert \frac{ \partial y}{\partial \bm{H}^{\ell}} \right\rVert_2 \leq \prod_{k=\ell}^{L} \left( 1 + \frac{1}{\sigma_{\bm{H}^{k}}} A + \frac{1}{\sigma_{\bm{H}^{k}}^2} B \right),
\end{equation}
where
\begin{equation}
A = \frac{\sigma^2}{(\sqrt{d} + \sqrt{d_\mathrm{FFN}})^2} + 2dh \left( \sqrt{s} + 2 + \frac{1}{\sqrt{s}} \right) \sigma^2 \left( d \sqrt{d_{\mathrm{head}}} + 1 + \sqrt{d_{\mathrm{head}}/d} \right),
\label{A_form}
\end{equation}
\begin{equation}
B = 2dh \left( \sqrt{s} + 2 + \frac{1}{\sqrt{s}} \right) \sigma^4 d \sqrt{d_{\mathrm{head}}}.
\label{B_form}
\end{equation}
This conclusion indicates that for the deep layers in the model, the partial gradient $\frac{ \partial y}{\partial \bm{H}^{\ell}}$ will be bounded.
Considering the growth in Lemma~\ref{thm:variance_bound}, the partial gradient will be bounded by
\begin{equation}
\begin{aligned}
\left\lVert \frac{ \partial y}{\partial \bm{H}^{\ell}} \right\rVert_2
&\leq \prod_{k=\ell}^{L} \left( 1 + \frac{1}{\sigma_{\bm{H}^{k}}} A + \frac{1}{\sigma_{\bm{H}^{k}}^2} B \right)\\
&\le \prod_{k=\ell}^{L} \left( 1 + \frac{1}{\sigma_{\bm{H}^{\ell}}} A + \frac{1}{\sigma_{\bm{H}^{\ell}}^2} B \right)\\
&= \left( 1 + \frac{1}{\sigma_{\bm{H}^{\ell}}} A + \frac{1}{\sigma_{\bm{H}^{\ell}}^2} B \right)^{L-\ell}.
\end{aligned}
\end{equation}
Let $L-\ell$ be a constant number $c$, which implies the $c$-th layer from the last.
As $\ell$ grows, $\sigma_{\bm{H}^{\ell}}$ will grow to infinity.
Then we get
\begin{equation}
\lim_{\ell\rightarrow +\infty}\left\lVert \frac{ \partial y}{\partial \bm{H}^{\ell}} \right\rVert_2
\le \lim_{\ell\rightarrow +\infty}\left( 1 + \frac{1}{\sigma_{\bm{H}^{\ell}}} A + \frac{1}{\sigma_{\bm{H}^{\ell}}^2} B \right)^{c}
=1.
\end{equation}
\end{proof}
This lemma shows that as the feature norm increases, the Pre-Norm Transformer cannot effectively propagate gradients except for the identity mappings.
Therefore, the model will be degenerated into an identity mapping dominated by residual connections, which is highly inefficient in language modeling.

\subsection{Proofs of Lemma~\ref{thm:nystrom_approximation}}
\label{appendix:proof3}

\begin{proof}
The original solution for linear regression with L2 regularization is defined as follows:
\begin{lemma}
Let $\bm{X} \in \mathbb{R}^{n \times p}$ be the design matrix, $\bm{y} \in \mathbb{R}^n$ be the response vector, and $\lambda > 0$ be the regularization parameter. The L2 regularized linear regression (Ridge Regression) minimizes the following objective function:
\begin{equation}
\operatorname{\arg\min}\limits_{\bm{\theta}} \|\bm{X}\bm{\theta} - \bm{y}\|_2^2 + \lambda \|\bm{\theta}\|_2^2,
\label{eq:regression_l2}
\end{equation}
where $\bm{\theta} \in \mathbb{R}^p$ is the coefficient vector. The closed-form solution for the optimal coefficient vector $\hat{\bm{\theta}}$ is given by:
\begin{equation}
\hat{\bm{\theta}} = (\bm{X}^\top \bm{X} + \lambda \bm{I})^{-1} \bm{X}^\top \bm{y},
\end{equation}
provided that the matrix $(\bm{X}^\top \bm{X} + \lambda \bm{I})$ is invertible. Here, $\bm{I}$ denotes the $p \times p$ identity matrix. This solution always exists for $\lambda > 0$, even when $\bm{X}^\top \bm{X}$ is singular.
\end{lemma}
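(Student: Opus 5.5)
The plan is to treat the ridge objective in \eqref{eq:regression_l2} as a quadratic function of \(\bm{\theta}\), show that it is strictly convex, and read off the minimizer from the first-order optimality condition. Write \(J(\bm{\theta}) = \|\bm{X}\bm{\theta} - \bm{y}\|_2^2 + \lambda\|\bm{\theta}\|_2^2\). Expanding gives
\begin{equation*}
J(\bm{\theta}) = \bm{\theta}^\top\left(\bm{X}^\top\bm{X} + \lambda\bm{I}\right)\bm{\theta} - 2\bm{y}^\top\bm{X}\bm{\theta} + \bm{y}^\top\bm{y},
\end{equation*}
so \(J\) is a quadratic with Hessian \(2(\bm{X}^\top\bm{X} + \lambda\bm{I})\).

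The first step, which carries the whole argument, is to show that \(\bm{M} := \bm{X}^\top\bm{X} + \lambda\bm{I}\) is positive definite whenever \(\lambda > 0\). For any nonzero \(\bm{v}\in\mathbb{R}^p\),
\begin{equation*}
\bm{v}^\top\bm{M}\bm{v} = \|\bm{X}\bm{v}\|_2^2 + \lambda\|\bm{v}\|_2^2 \geq \lambda\|\bm{v}\|_2^2 > 0.
\end{equation*}
This gives three facts at once: \(\bm{M}\) is invertible, which is the claim that the solution exists even when \(\bm{X}^\top\bm{X}\) is singular; \(J\) is strictly convex; and \(J\) is coercive.

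Next I set the gradient to zero:
\begin{equation*}
\nabla J(\bm{\theta}) = 2\bm{X}^\top(\bm{X}\bm{\theta} - \bm{y}) + 2\lambda\bm{\theta} = 0 .
\end{equation*}
This is the normal equation \(\bm{M}\bm{\theta} = \bm{X}^\top\bm{y}\), whose unique solution is \(\hat{\bm{\theta}} = \bm{M}^{-1}\bm{X}^\top\bm{y}\). Strict convexity then makes this stationary point the unique global minimizer.

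As an alternative that avoids calculus, I would complete the square:
\begin{equation*}
J(\bm{\theta}) = (\bm{\theta} - \hat{\bm{\theta}})^\top\bm{M}(\bm{\theta} - \hat{\bm{\theta}}) + \text{const}.
\end{equation*}
Positive definiteness of \(\bm{M}\) shows directly that the minimum is attained exactly at \(\hat{\bm{\theta}}\).

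There is no real obstacle here; the only step needing care is the positive-definiteness argument, since it is what removes any rank assumption on \(\bm{X}\). For later use in Lemma~\ref{thm:nystrom_approximation}, I would also note that the result extends column-wise to a matrix response \(\bm{Y}\): the squared Frobenius objective decouples over the columns of \(\bm{\theta}\), giving \(\hat{\bm{\Theta}} = (\bm{X}^\top\bm{X} + \lambda\bm{I})^{-1}\bm{X}^\top\bm{Y}\).
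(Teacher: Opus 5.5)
Your proposal is correct and follows the same route as the paper: set the gradient to zero to obtain the normal equation $(\bm{X}^\top\bm{X}+\lambda\bm{I})\bm{\theta}=\bm{X}^\top\bm{y}$, then invert using positive definiteness of $\bm{X}^\top\bm{X}+\lambda\bm{I}$ for $\lambda>0$. Your version is slightly more complete, since you prove the positive definiteness explicitly ($\bm{v}^\top\bm{M}\bm{v}\geq\lambda\|\bm{v}\|_2^2$) and use strict convexity to show the stationary point is the unique global minimizer, whereas the paper only asserts the first fact and leaves the second implicit.
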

To minimize \(L(\bm{\theta})\), take the partial derivative with respect to \(\bm{\theta}\) and set it to \(\bm{0}\).
The differential equation is listed as follows:
\begin{equation}
\frac{\partial L(\bm{\theta})}{\partial \bm{\theta}} = -2\bm{X}^T \bm{y} + 2 \left( \bm{X}^T \bm{X} + \lambda \bm{I} \right) \bm{\theta} = \bm{0}.
\end{equation}
Simplify by dividing by 2 and rearranging terms:
\begin{equation}
\left( \bm{X}^T \bm{X} + \lambda \bm{I} \right) \bm{\theta} = \bm{X}^T \bm{y}.
\end{equation}
Since \(\bm{X}^T \bm{X} + \lambda \bm{I}\) is positive definite (hence invertible) for \(\lambda > 0\), left-multiply both sides by its inverse:
\begin{equation*}
\hat{\bm{\theta}} = \left( \bm{X}^T \bm{X} + \lambda \bm{I} \right)^{-1} \bm{X}^T \bm{y}.
\end{equation*}
\end{proof}
As for \eqref{eq:nystrom_approximation}, we aimed at minimizing the difference between the original parameters and the adjusted ones. Thus, we add an L2 regularization on the change of parameters \(\Delta \bm{W}_{D}\).
Given that $\theta = \Delta\bm{W}_{D}$, $\bm{X} = \psi_{s}\left(\bm{X}_{i}\bm{W}_{U}\right)\bm{S}_{k}$, $\bm{y}=\psi_{s}\left(\bm{X}_{i}\bm{W}_{U}\right)\left(\bm{I} - \bm{S}_{k}\bm{S}_{k}^{\top}\right)\bm{W}_{D}$, we will get the solutions for the MLP pruning:
\begin{equation}
\Delta\hat{\bm{W}}_{D}
=\left(\bm{S}_{k}^{\top}\bm{C}_{\psi}\bm{S}_{k} + \lambda\bm{I}\right)^{-1}\bm{S}_{k}^{\top}\bm{C}_{\psi}\left(\bm{I} - \bm{S}_{k}\bm{S}_{k}^{\top}\right)\bm{W}_{D},
\end{equation}
where \(\bm{C}_{\psi}=\sum_{i=1}^{N}\left(\psi_{s}\left(\bm{X}_{i}\bm{W}_{U}\right)\right)^{\top}\psi_{s}\left(\bm{X}_{i}\bm{W}_{U}\right)\) denotes the covariance.
This equation ensures the difference in the MLP output with regularization is minimized during the structured pruning.

\newpage
\section{Empirical Results}
\label{appendix:empirical}

We present more empirical results in various architectures, including LLaMA-2~\citep{llama_v2} at \{7B, 13B\}, Qwen-1.5~\citep{qwen} at \{7B, 14B\}, and Baichuan-2 at \{7B, 13B\}~\citep{baichuan2}.
As shown in Figure~\ref{fig:more_empirical_results}, the high cross-layer similarity and large feature norm is consistent across various model types and parameter sizes.
According to the theoretical analysis above, this phenomenon is deeply related to the architecture of transformers.
\begin{figure}[h]
    \centering
    \includegraphics[width=0.95\linewidth]{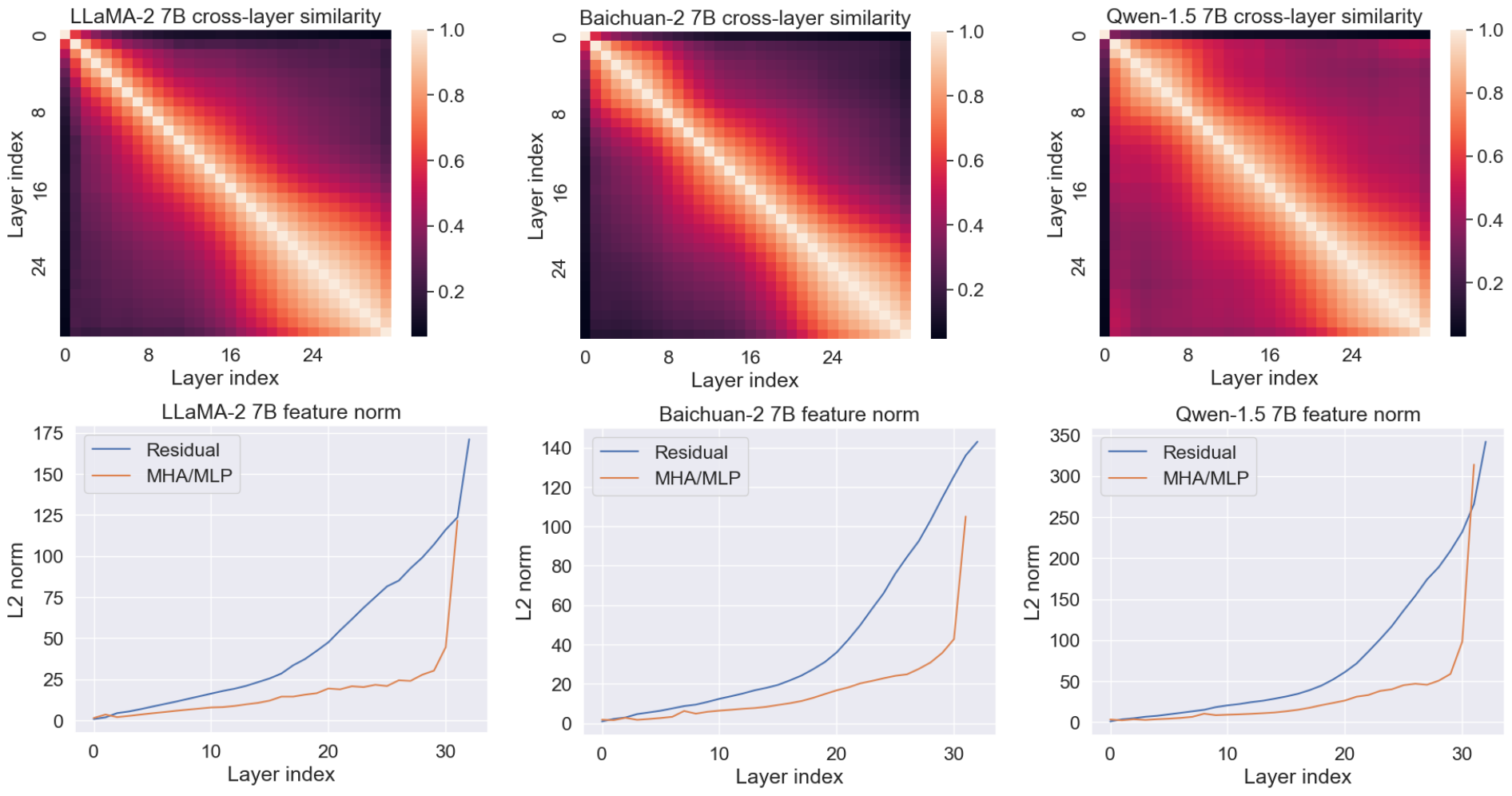}
    \includegraphics[width=0.95\linewidth]{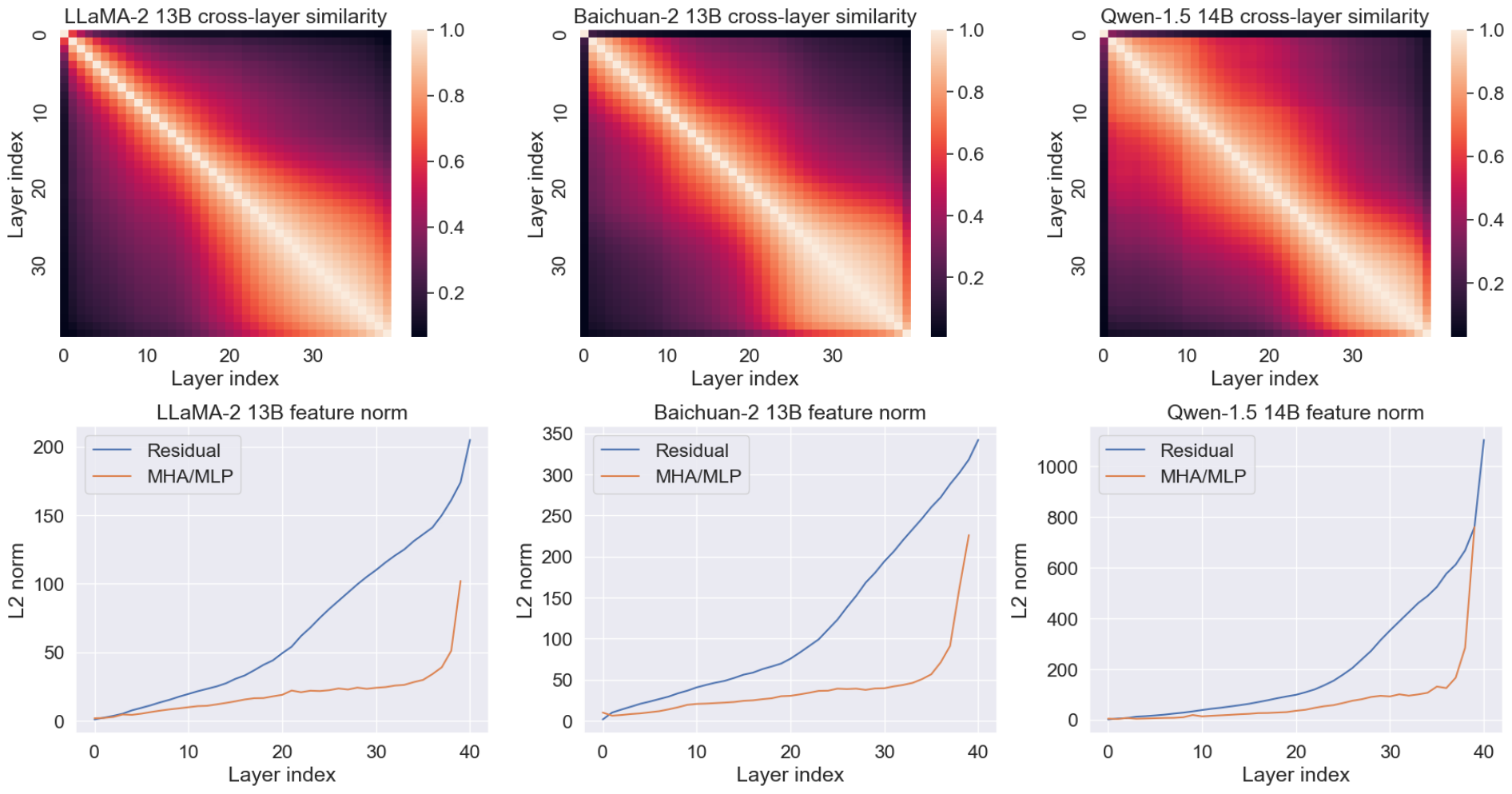}
    \caption{Cross-layer similarity and feature norm in multiple architectures.}
    \label{fig:more_empirical_results}
\end{figure}

\newpage
\section{Implementations}
\label{appendix:implementation}

\subsection{Modified algorithms for grouped query attention}
Some modern LLMs, such as LLaMA-3~\citep{llama_v3}, utilize a shared key-value strategy to improve inference efficiency, which is denoted as Grouped Query Attention (GQA).
To keep the pruned architecture the same as the original attention blocks, we modify the channel pruning on MHA.
Instead of finding the least important attention head individually, we find the least important pair of key and value.
Then we delete this pair and its corresponding queries.
We apply this modification to LLaMA-3 8B compression in the paper.

\subsection{Implementation details}

\paragraph{Setup} We utilize the HuggingFace generation library~\citep{huggingface-transformers} to implement our LLM models and use PyTorch~\citep{pytorch} Hooks for hidden states recording and correlation matrix estimations.
Unless otherwise specified, the experiments were conducted on 8 NVIDIA H800 80GB GPUs.
The models use the BF16 data format.
The calibration set consists of a random sample of 128 sequences, each of length 2048, from WikiText-2, following the common practice in the literature~\citep{slicegpt}.

\paragraph{Datasets} We consider multiple tasks in LM Evaluation Harness~\citep{eval-harness}, including ARC-e, ARC-c~\citep{arc}, PIQA~\citep{piqa}, WinoGrande~\citep{winogrande}, and HellaSwag~\citep{hellaswag}.

\paragraph{Correlation Matrix Estimations} Our algorithms utilize input correlation matrices in the MLP pruning method.
We gather the empirical data from the calibration set by registering the PyTorch~\citep{pytorch} hooks in the model.
Our process compresses the MLP blocks from all layers first, then compresses the MHA blocks from all layers.

\paragraph{Matrix Operations}
We utilize `torch.linalg.solve' in PyTorch for computing the inverse on tensors of dtype FP32.

\paragraph{MLP module} Our MLP pruning method requires a ridge leverage score parameter \(\lambda\).
We set \(\lambda\) to 10 times the mean singular value of the correlation matrix across all experiments.

\paragraph{MHA module}
Our MHA pruning method removes the entire group of query, key, and value matrices.

\paragraph{Recovery Fine-Tuning}
We use 50K samples of refined Alpaca for instruction tuning.
The models are trained with 2 epochs with a learning rate of \(1\times10^{-4}\).
The other primary hyperparameters used are \(lora\_alpha=16\), \(lora\_r=8\), and \(batch\_size=64\).

\paragraph{Latency and Throughput}
We test the throughput and latency results on token generation and prompt processing, respectively.
For token generation, we generate sentences with a length of 128 tokens and a batch size of 64.
For prompt processing, we measure the latency when processing an input sequence with 2048 tokens.
The HuggingFace~\cite{huggingface-transformers} implementation is used for benchmarking, and all experiments employ pipeline parallelism if distributed on multiple GPUS.

\subsection{Throughput and latency}
Table~\ref{tab:latency_throughtput_ppl} compares the latency, throughput, and perplexities of the compressed LLaMA-2~\citep{llama_v2} models with other pruning methods.
We test the throughput and latency results on token generation and prompt processing, respectively.
For token generation, we generate sentences with a length of 128 tokens and a batch size of 64.
For prompt processing, we measure the latency when processing an input sequence with 2048 tokens.
Even with dedicated hardware support, 2:4 pruning methods still lead to minor speedup (\(1.10\times\)) and lower throughput (\(0.98\times\) with a sparsity ratio of 50\%.
Width pruning methods, such as SliceGPT~\citep{slicegpt}, are more hardware-friendly and speed up the pruned model, while still lagging behind depth pruning methods.
FlattenGPT inherits the advantages of acceleration in depth pruning and further improves the performance.
Since the compressed model architecture of FlattenGPT is exactly the same as SLEB, the throughput and latency results are the same.
FlattenGPT outperforms all other methods in throughput (\(1.27\times\)) and latency (\(1.26\times\)), and achieves a comparable perplexities.
These results demonstrate that FlattenGPT has a better trade-off between speed and performance.

\begin{table}[ht]
\centering
\scriptsize
\caption{Throughput (tokens/s), latency (ms), and perplexity on WikiText-2 test split results. Throughput and latency are measured with LLaMA-2-70B on 2 NVIDIA A100 GPUs.}
\label{tab:latency_throughtput_ppl}
\renewcommand{\arraystretch}{0.9}
\setlength{\tabcolsep}{8pt}
\begin{tabular}{l|cc|cc|cc|ccc}
\toprule
\rowcolor{gray!20}
~	& \bf Pruning	& ~	& \bf Throughput	& ~	& \bf Latency	& ~	& \multicolumn{3}{c}{\bf LLaMA-2}	\\
\rowcolor{gray!20}
\multirow{-2}{*}{\bf Method}	& \bf Unit	& \multirow{-2}{*}{\bf Sparsity}	& \bf (Tokens/s)	& \multirow{-2}{*}{\bf Improve \(\uparrow\)}	& \bf (ms)	& \multirow{-2}{*}{\bf Speedup \(\uparrow\)}	& \bf 7B	& \bf 13B	& \bf 70B	\\
\midrule
\midrule
Dense		& -		& 0\%	& 299	& 1.00$\times$	& 1718.4	& 1.00$\times$	& 5.47	& 4.88	& 3.32	\\
\midrule
SparseGPT~\cite{sparsegpt}	& 2:4	& 50\%	& 293	& 0.98$\times$	& 1555.5	& 1.10$\times$	& 10.79	& 8.75	& 5.70	\\
Wanda~\cite{wanda}		& 2:4	& 50\%	& 293	& 0.98$\times$	& 1555.5	& 1.10$\times$	& 12.09	& 8.99	& 5.48	\\
DSnoT~\cite{DSnoT}		& 2:4	& 50\%	& 293	& 0.98$\times$	& 1555.5	& 1.10$\times$	& 11.97	& 8.87	& 5.49	\\
\midrule
LLM-Pruner~\cite{llm_pruner}	& Width & 20\%	& 314	& 1.05$\times$	& 1534.3	& 1.12$\times$	& 10.58	& 8.56	& -		\\
SliceGPT~\cite{slicegpt}	& Width & 20\%	& 314	& 1.05$\times$	& 1658.7	& 1.04$\times$	& 6.87	& 6.01	& 4.44	\\
SliceGPT~\cite{slicegpt}	& Width & 25\%	& 331	& 1.11$\times$	& 1440.7	& 1.19$\times$	& 7.55	& 6.63	& 4.89	\\
SliceGPT~\cite{slicegpt}	& Width & 30\%	& 343	& 1.15$\times$	& 1364.2	& 1.26$\times$	& 8.59	& 7.44	& 5.44	\\
\midrule
SLEB~\cite{sleb}		& Depth	& 20\%	& 381	& 1.27$\times$	& 1364.1	& 1.26$\times$	& 9.14	& 6.80	& 4.88 \\
\rowcolor{cyan!10}
FlattenGPT	& Depth	& 20\%	& 381	& 1.27$\times$	& 1364.1	& 1.26$\times$	& 8.68	& 6.50	& 4.79 \\
\bottomrule
\end{tabular}
\end{table}

\subsection{Pruning Computation Cost}
Table~\ref{tab:computation_cost} compares the compression times of FlattenGPT with the prevailing pruning methods, including SliceGPT~\citep{slicegpt}, LLM surgeon~\citep{llm_surgeon}, and MoDeGPT~\citep{modegpt}.
LLM Surgeon requires the gradient information of the LLMs, leading to heavy computation.
SliceGPT and ModeGPT do not leverage gradients, they can compress a model with fewer GPUs and computation time.
Our approach, FlattenGPT, is even faster than these methods, as we collect the correlation matrix of all layers at the same time.
Thus FlattenGPT is an efficient pruning method in this area.

\begin{table}[ht]
\scriptsize
\caption{Computation cost of pruning 20\% with FlattenGPT and recovery fine-tuning. The cost of the other methods are reported from the original paper, and thus the device is inconsistent, while it still illustrates the conclusion that our FlattenGPT is extremely fast. The calibration dataset consists of 128 samples with a sequence length of 2048.}
\label{tab:computation_cost}
\renewcommand{\arraystretch}{0.9}
\setlength{\tabcolsep}{12pt}
\centering
\begin{tabular}{l|c|cc|cc|c}
\toprule
\rowcolor{gray!20}
~								& ~						& \multicolumn{2}{c|}{\bf Pruning}	& \multicolumn{2}{c|}{\bf RFT}	&  \\
\rowcolor{gray!20}
\multirow{-2}{*}{\bf Method}	& \multirow{-2}{*}{\bf Model}	& \bf Time		& \bf GPUs				& \bf Time		& \bf GPUs			& \multirow{-2}{*}{\bf Total}	\\
\midrule
\multirow{2}{*}{SliceGPT~\cite{slicegpt}}		& LLaMA-2 7B	& 44m		& 1 H100 80GB		& 23m		& 1 H100 80GB	& 1h07m	\\
~								& LLaMA-2 13B	& 1h08m		& 1 H100 80GB		& 44m		& 1 H100 80GB	& 1h52m	\\
\midrule
\multirow{2}{*}{LLM surgeon~\cite{llm_surgeon}}	& LLaMA-2 7B	& 17h08m	& 4 H100 80GB		& -			& -				& -		\\
~								& LLaMA-2 13B	& 1d9h26m	& 8 H100 80GB		& -			& -				& -		\\
\midrule
\multirow{2}{*}{ModeGPT~\cite{modegpt}}		& LLaMA-2 7B	& 4h09m		& 1 A100 80GB		& 31m		& 1 A100 80GB	& 4h40m	\\
~								& LLaMA-2 13B	& 8h26m		& 1 A100 80GB		& -			& -				& -		\\
\midrule
\rowcolor{cyan!10}
~								& LLaMA-2 7B	& 7m		& 1 H800 80GB		& 25m		& 1 H800 80GB	& 32m	\\
\rowcolor{cyan!10}
\multirow{-2}{*}{FlattenGPT}	& LLaMA-2 13B	& 24m		& 1 H800 80GB		& 45m		& 1 H800 80GB	& 1h09m	\\
\bottomrule
\end{tabular}
\end{table}

\newpage
\section{Experiments}
\label{appendix:experiments}

\subsection{Additional comparison of Training-free pruning methods}
We compare the performance with other training-free pruning methods in Table~\ref{tab:zeroshot_training_free_all}, including both width compression and depth compression.
The width compression includes the 2:4 pruning methods SparseGPT~\citep{sparsegpt} and Wanda~\citep{wanda}, and structured channel pruning methods SliceGPT~\citep{slicegpt}.
The depth compression includes LaCo~\citep{laco}, SLEB~\citep{sleb}, Relative magnitude~\citep{rm}, ShortGPT~\citep{shortgpt}, and BlockPruner~\citep{blockpruner}.
FlattenGPT outperforms these methods on WikiText-2 perplexity and accuracy on the zero-shot downstream tasks, showcasing the effectiveness of our method.

\begin{table*}[h]
\scriptsize
\caption{Comparison with training-free pruning methods on WikiText-2 perplexity and accuracies on zero-shot tasks.}
\label{tab:zeroshot_training_free_all}
\renewcommand{\arraystretch}{0.9}
\setlength{\tabcolsep}{8pt}
\centering
\begin{tabular}{c | l | c | c | c  c  c  c  c | c}
\toprule
\rowcolor{gray!20}
\multicolumn{2}{c|}{\bf Method} & \bf Sparsity	& \bf PPL \(\downarrow\)	& \bf WinoG	& \bf HellaS	& \bf PIQA	& \bf ARC-e	& \bf ARC-c	& \bf Avg.  \\
\midrule
\midrule
\multicolumn{2}{c|}{LLaMA-2 7B (original)}	& 0\%		& 5.47	& 69.06	& 75.99	& 79.11	& 74.58	& 46.25	& 69.00	\\
\midrule
\multirow{3}{*}{Width}	& SpareGPT~\citep{sparsegpt}	& 2:4 (50\%)& 10.79	& 64.96	& 58.93	& 72.14	& 60.90	& 34.22	& 58.23	\\
& Wanda~\citep{wanda}						& 2:4 (50\%)& 12.09	& 62.27	& 55.33	& 70.84	& 57.58	& 31.91	& 55.59	\\
& SliceGPT~\citep{slicegpt}					& 21.45\%	& 7.02	& 59.91	& 56.04	& 72.42	& 63.64	& 37.12	& 57.83	\\
\midrule
\multirow{6}{*}{Depth}	& SLEB~\citep{sleb}	& 21.02\%	& 9.14	& 58.96	& 62.47	& 73.07	& 56.48	& 33.02	& 56.80	\\
	& LaCo~\citep{laco}						& 21.02\%	& 50.39	& 60.46	& 54.08	& 68.34	& 55.39	& 35.84	& 54.82	\\
	& RM~\citep{rm}							& 21.02\%	& 676.8	& 49.25	& 29.22	& 54.46	& 34.43	& 22.53	& 37.98	\\
	& ShortGPT~\citep{shortgpt}				& 21.02\%	& 18.45	& 65.90	& 62.63	& 70.24	& 56.06	& 36.09	& 58.18	\\
	& BlockPruner~\citep{blockpruner}		& 21.99\%	& 11.51	& 62.43	& 65.87	& 74.21	& 61.07	& 37.29	& 60.17	\\
\rowcolor{cyan!10}
\cellcolor{white}		& FlattenGPT		& 21.02\%	& \bf 8.68	& 66.54	& 68.45	& 72.74	& 63.43	& 41.30	& \bf 62.49	\\
\midrule
\midrule
\multicolumn{2}{c|}{LLaMA-2 13B (original)}	& 0\%		& 4.88	& 72.22	& 79.39	& 80.47	& 77.48	& 49.23	& 71.76	\\
\midrule
\multirow{3}{*}{Width}	& SpareGPT~\citep{sparsegpt}	& 2:4 (50\%)& 8.75	& 68.51	& 65.52	& 75.46	& 66.04	& 39.76	& 63.06	\\
	& Wanda~\citep{wanda}					& 2:4 (50\%)& 8.99	& 67.01	& 63.09	& 73.94	& 64.31	& 37.80	& 61.23	\\
	& SliceGPT~\citep{slicegpt}				& 25\%		& 6.63	& 67.48	& 58.10	& 68.55	& 62.50	& 37.88	& 58.90	\\
\midrule
\multirow{5}{*}{Depth}	& LaCo~\citep{laco}	& 24.37\%	& 13.97	& 59.27	& 60.44	& 72.42	& 54.34	& 34.56	& 56.21	\\
	& RM~\citep{rm}							& 24.37\%	& 10.08	& 66.61	& 66.80	& 73.72	& 66.12	& 41.98	& 63.05	\\
	& ShortGPT~\citep{shortgpt}				& 24.37\%	& 20.06	& 70.80	& 67.80	& 72.74	& 60.35	& 41.30	& 62.60	\\
	& BlockPruner~\citep{blockpruner}		& 25.12\%	& 8.16	& 66.30	& 72.20	& 76.93	& 65.82	& 41.38	& 64.53	\\
\rowcolor{cyan!10}
\cellcolor{white}		& FlattenGPT		& 24.37\%	& \bf 6.68	& 71.11	& 73.44	& 76.33	& 72.10	& 44.54	& \bf 67.50	\\
\midrule
\midrule
\multicolumn{2}{c|}{LLaMA-2 70B (original)}	& 0\%		& 3.32	& 77.98	& 83.84	& 82.70	& 80.98	& 57.34	& 76.57	\\
\midrule
\multirow{3}{*}{Width}	& SpareGPT~\citep{sparsegpt}	& 2:4 (50\%)& 5.70	& 76.56	& 76.09	& 80.03	& 76.94	& 49.74	& 71.87	\\
	& Wanda~\cite{wanda}					& 2:4 (50\%)& 5.48	& 74.66	& 79.22	& 80.30	& 76.35	& 51.19	& 72.34	\\
	& SliceGPT~\cite{slicegpt}				& 20\%		& 4.44	& 74.92	& 72.98	& 76.61	& 80.51	& 55.20	& 72.34	\\
\midrule
\multirow{3}{*}{Depth}	& SLEB~\cite{sleb}	& 19.84\%	& 4.88	& 72.93	& 77.21	& 80.14	& 75.38	& 48.38	& 70.81	\\
& ShortGPT~\cite{shortgpt}					& 19.84\%	& 66.33	& 71.96	& 78.87	& 76.02	& 76.02	& 52.95	& 71.68	\\
\rowcolor{cyan!10}
\cellcolor{white}	& FlattenGPT			& 19.84\%	& \bf 4.79	& 77.35	& 81.42	& 80.36	& 77.48	& 53.07 & \bf 73.94	\\
\midrule
\midrule
\multicolumn{2}{c|}{Baichuan-2 7B (original)}& 0\%		& 6.04	& 68.27	& 72.18	& 77.48	& 72.98	& 42.75	& 66.73	\\
\midrule
\multirow{5}{*}{Depth}	& LaCo~\citep{laco}	& 21.57\%	& 26.46	& 58.56	& 51.50	& 68.28	& 52.90	& 28.50	& 51.95	\\
	& RM~\citep{rm}							& 21.57\%	& 189.8	& 52.33	& 30.87	& 59.96	& 38.17	& 23.63	& 40.99	\\
	& ShortGPT~\citep{shortgpt}				& 21.57\%	& 31.05	& 62.67	& 50.01	& 63.71	& 47.31	& 30.72	& 50.88	\\
	& BlockPruner~\citep{blockpruner}		& 22.45\%	& \bf 15.38	& 61.48	& 58.09	& 69.75	& 58.08	& 33.02	& 56.08	\\
\rowcolor{cyan!10}
\cellcolor{white}		& FlattenGPT		& 21.57\%	& 20.55	& 64.33	& 61.50	& 69.42	& 56.27	& 35.24	& \bf 57.35	\\
\midrule
\midrule
\multicolumn{2}{c|}{Baichuan-2 13B (original)}& 0\%		& 6.66	& 70.40	& 75.23	& 78.84	& 74.07	& 47.70	& 69.25	\\
\midrule
\multirow{5}{*}{Depth}	& LaCo~\citep{laco}	& 22.68\%	& 27.07	& 58.01	& 54.00	& 70.89	& 57.11	& 32.94	& 54.59	\\
	& RM~\citep{rm}							& 22.68\%	& 17.70	& 67.88	& 63.78	& 68.99	& 57.49	& 37.54	& 59.14	\\
	& ShortGPT~\citep{shortgpt}				& 22.68\%	& 20.69	& 68.27	& 61.71	& 69.31	& 56.52	& 36.69	& 58.50	\\
	& BlockPruner~\citep{blockpruner}		& 24.19\%	& 15.36	& 64.01	& 64.20	& 71.44	& 59.81	& 37.88	& 59.47	\\
\rowcolor{cyan!10}
\cellcolor{white}		& FlattenGPT		& 22.68\%	& \bf 13.71	& 68.19	& 65.27	& 71.22	& 58.75	& 37.03	& \bf 60.09	\\
\midrule
\midrule
\multicolumn{2}{c|}{Qwen-1.5 7B (original)}	& 0\%		& 7.95	& 66.46	& 76.92	& 79.22	& 62.16	& 42.66	& 65.48	\\
\midrule
\multirow{5}{*}{Depth}	& LaCo~\citep{laco}	& 20.97\%	& 39.23	& 58.64	& 56.35	& 70.40	& 46.89	& 32.85	& 53.03	\\
	& RM~\citep{rm}							& 20.97\%	& 2026	& 49.88	& 42.00	& 67.36	& 54.17	& 28.58	& 48.40	\\
	& ShortGPT~\citep{shortgpt}				& 20.97\%	& 49.88	& 62.12	& 58.87	& 69.53	& 43.60	& 32.17	& 53.26	\\
	& BlockPruner~\citep{blockpruner}		& 21.83\%	& 20.58	& 55.56	& 59.31	& 71.71	& 53.70	& 33.28	& 54.71	\\
\rowcolor{cyan!10}
\cellcolor{white}		& FlattenGPT		& 20.97\%	& \bf 16.05	& 59.27	& 62.89	& 68.39	& 56.99	& 37.46	& \bf 57.00	\\
\midrule
\midrule
\multicolumn{2}{c|}{Qwen-1.5 14B (original)}	& 0\%		& 7.44	& 70.56	& 79.41	& 79.87	& 68.48	& 47.01	& 69.07	\\
\midrule
\multirow{5}{*}{Depth}	& LaCo~\citep{laco}	& 22.25\%	& 16.32	& 58.33	& 60.16	& 71.55	& 53.70	& 34.04	& 55.56	\\
	& RM~\citep{rm}							& 22.25\%	& 55.99	& 53.28	& 42.08	& 67.08	& 50.72	& 29.01	& 48.43	\\
	& ShortGPT~\citep{shortgpt}				& 22.25\%	& 1237	& 55.96	& 36.16	& 58.60	& 38.09	& 34.81	& 44.72	\\
	& BlockPruner~\citep{blockpruner}		& 23.72\%	& 15.67	& 61.48	& 66.92	& 75.24	& 59.51	& 39.08	& 60.45	\\
\rowcolor{cyan!10}
\cellcolor{white}		& FlattenGPT		& 22.25\%	& \bf 11.55	& 65.59	& 68.57	& 74.10	& 65.03	& 40.78	& \bf 62.81	\\
\bottomrule
\end{tabular}
\end{table*}

\subsection{Additional comparison of recovery fine-tuning}
Table~\ref{tab:zeroshot_rft_all} shows the impact of Recovery Fine-Tuning (RFT).
Our method outperforms previous methods after RFT.
This is because the flattening method retains the knowledge from all layers and makes it easier for fine-tuning.

\begin{table}[h]
\scriptsize
\caption{Zero-shot task performance of recovery fine-tuning. $^\dagger$ indicates fine-tuned on Alpaca~\citep{alpaca} dataset.}
\label{tab:zeroshot_rft_all}
\renewcommand{\arraystretch}{0.9}
\setlength{\tabcolsep}{10pt}
\centering
\begin{tabular}{c | l | c | c  c  c  c  c | c}
\toprule
\rowcolor{gray!20}
\multicolumn{2}{c|}{\bf Method}	& \bf Sparsity	& \bf WinoG & \bf HellaS	& \bf PIQA	& \bf ARC-e	& \bf ARC-c	& \bf Avg.  \\
\midrule
\midrule
\multicolumn{2}{c|}{LLaMA-2 7B (original)}			& 0\%		& 69.06	& 75.99	& 79.11	& 74.58	& 46.25	& 69.00	\\
\midrule
\multirow{3}{*}{Width}	& Wanda-sp~\citep{wanda}	& 18.81\%	& 63.77	& 70.66	& 76.44	& 69.61	& 42.15	& 64.53	\\
& FLAP~\citep{flap}									& 19.19\%	& 64.72	& 64.69	& 73.39	& 62.25	& 32.51	& 59.51	\\
& LLM-Pruner~\citep{llm_pruner}						& 18.82\%	& 61.17	& 66.13	& 76.66	& 64.86	& 37.88	& 61.34	\\
& LLM-Pruner$^\dagger$~\citep{llm_pruner}			& 18.82\%	& 61.88	& 67.13	& 77.48	& 65.78	& 38.48	& 62.15	\\
\midrule
\multirow{7}{*}{Depth}	& SLEB~\citep{sleb}			& 18.02\%	& 59.75	& 63.95	& 73.94	& 63.47	& 35.15	& 59.25	\\
& Shortened LLaMA~\citep{shortenedllama} 			& 18.02\%	& 57.46	& 63.36	& 73.78	& 64.02	& 33.19	& 58.36	\\
& Shortened LLaMA$^\dagger$~\citep{shortenedllama} 	& 18.02\%	& 58.80	& 67.99	& 76.06	& 68.81	& 37.88	& 61.91	\\
& SLM~\citep{slm}									& 18.02\%	& 66.30	& 65.10	& 70.24	& 61.45	& 38.31	& 60.28	\\
& SLM$^\dagger$~\citep{slm}							& 18.02\%	& 67.09	& 70.48	& 73.67	& 69.11	& 41.21	& 64.31	\\
\rowcolor{cyan!10}
\cellcolor{white}	& FlattenGPT					& 18.02\%	& 67.40	& 70.74	& 74.59	& 64.44	& 41.98	& 63.83	\\
\rowcolor{cyan!10}
\cellcolor{white}	& FlattenGPT$^\dagger$			& 18.02\%	& 68.75	& 73.01	& 74.97	& 67.40	& 45.05	& 66.24	\\
\midrule
\midrule
\multicolumn{2}{c|}{LLaMA-2 13B (original)}			& 0\%		& 72.22	& 79.39	& 80.47	& 77.48	& 49.23	& 71.76	\\
\midrule
\multirow{3}{*}{Width}	& Wanda-sp~\citep{wanda}	& 19.49\%	& 67.01	& 74.75	& 77.48	& 73.48	& 44.11	& 67.37	\\
& FLAP~\citep{flap}									& 19.47\%	& 68.35	& 69.07	& 74.65	& 70.83	& 40.61	& 64.70	\\
& LLM-Pruner~\citep{llm_pruner}						& 19.48\%	& 64.17	& 72.02	& 78.51	& 69.99	& 43.60	& 65.66	\\
& LLM-Pruner$^\dagger$~\citep{llm_pruner}			& 19.48\%	& 67.32	& 74.84	& 79.16	& 73.49	& 43.77	& 67.72	\\
\midrule
\multirow{7}{*}{Depth}	& SLEB~\citep{sleb}			& 19.50\%	& 64.96	& 70.55	& 76.61	& 64.35	& 38.31	& 62.96	\\
& Shortened LLaMA~\citep{shortenedllama}			& 19.50\%	& 70.48	& 71.19	& 75.03	& 69.53	& 43.09	& 65.86	\\
& Shortened LLaMA$^\dagger$~\citep{shortenedllama}	& 19.50\%	& 71.11	& 75.20	& 76.28	& 74.79	& 46.67	& 68.81	\\
& SLM~\citep{slm}									& 19.50\%	& 70.80	& 67.73	& 72.36	& 64.82	& 39.68	& 63.08	\\
& SLM$^\dagger$~\citep{slm}							& 19.50\%	& 71.67	& 76.37	& 77.42	& 76.56	& 48.55	& 70.11	\\
\rowcolor{cyan!10}
\cellcolor{white}	& FlattenGPT					& 19.50\%	& 71.43	& 75.26	& 77.58	& 71.68	& 45.39	& 68.27	\\
\rowcolor{cyan!10}
\cellcolor{white}	& FlattenGPT$^\dagger$			& 19.50\%	& 71.82	& 77.85	& 78.73	& 75.08	& 49.15	& 70.53	\\
\midrule
\midrule
\multicolumn{2}{c|}{LLaMA-3 8B (original)}			& 0\%		& 73.40	& 79.17	& 79.49	& 80.09	& 53.24	& 73.08	\\
\midrule
\multirow{2}{*}{Width}	& FLAP~\citep{flap}			& 16.30\%	& 49.96	& 26.36	& 52.18	& 26.81	& 24.83	& 36.03	\\
& LLM-Pruner~\citep{llm_pruner}						& 15.39\%	& 68.67	& 67.79	& 77.04	& 68.60	& 39.08	& 64.23	\\
& LLM-Pruner$^\dagger$~\citep{llm_pruner}			& 15.39\%	& 70.32	& 74.27	& 79.49	& 74.29	& 46.59	& 68.99	\\
\midrule
\multirow{6}{*}{Depth}	& Shortened LLaMA~\citep{shortenedllama} 	& 16.30\%	& 57.85	& 60.99	& 73.23	& 65.40	& 34.04	& 58.30	\\
& Shortened LLaMA$^\dagger$~\citep{shortenedllama} 	& 16.30\%	& 62.75	& 72.70	& 78.07	& 75.30	& 44.80	& 66.72	\\
& SLM~\citep{slm}									& 16.30\%	& 69.61	& 61.8	& 71.98	& 66.04	& 41.81	& 62.25	\\
& SLM$^\dagger$~\citep{slm}							& 16.30\%	& 71.74	& 73.77	& 77.64	& 76.60	& 50.94	& 70.14	\\
\rowcolor{cyan!10}
\cellcolor{white}	& FlattenGPT					& 16.30\%	& 71.82	& 70.63	& 72.91	& 69.1	& 46.59	& 66.21	\\
\rowcolor{cyan!10}
\cellcolor{white}	& FlattenGPT$^\dagger$			& 16.30\%	& 73.09	& 75.93	& 77.09	& 75.72	& 50.34	& 70.43	\\
\bottomrule
\end{tabular}
\end{table}

\subsection{Effectiveness of Flattening}

\noindent\textbf{Flattened layer indices:}
We show which transformer blocks are chosen to be flattened in Figure~\ref{fig:flattened_layer_indices}.
The location of flattened transformer blocks is highly consistent across various target models.
The late blocks are almost flattened except the last one or two, whereas the early blocks are rarely selected.
This is related to the similarity distribution in the model, where the late blocks have more similar input features.

\noindent\textbf{Performance after flattening:}
We need to answer the question: \textit{How does flattening improve the performance of the depth-compressed model?}
The answer is that \noindent\textbf{Flattening preserves more knowledge}.
Compared with the layer pruning methods, flattening preserves the parameters and thus preserves the knowledge in the parameters.
This knowledge facilitates performance maintenance during depth compression.
Figure~\ref{fig:prune_flatten_comparison} illustrates the comparison of layer pruning and layer flattening on LLaMA-2 7B.
We use the same layer index in both settings, \emph{i.e.}, to prune the selected layer or merge the selected layer with the prior layer.
In the flattening experiments, the model performance gradually drops as the number of flattened layers increases.
After flattening 8 layers, it has maintained 98\% of accuracy on zero-shot tasks and has a 19\% degradation on perplexity.
This result leaves plenty of room for channel pruning.
However, on the contrary, layer pruning quickly loses performance with merely one or two pruned layers.
It only maintains 80\% of accuracy on zero-shot tasks and 319\% degradation on perplexity!
With such information loss, layer-pruning-based methods are very limited and cannot achieve high performance.
Our flattening method has alleviated this problem, thus providing an effective way of depth compression.




\begin{figure}[ht]
\centering
\begin{minipage}[t]{0.45\textwidth}
    \centering
    \includegraphics[width=\linewidth]{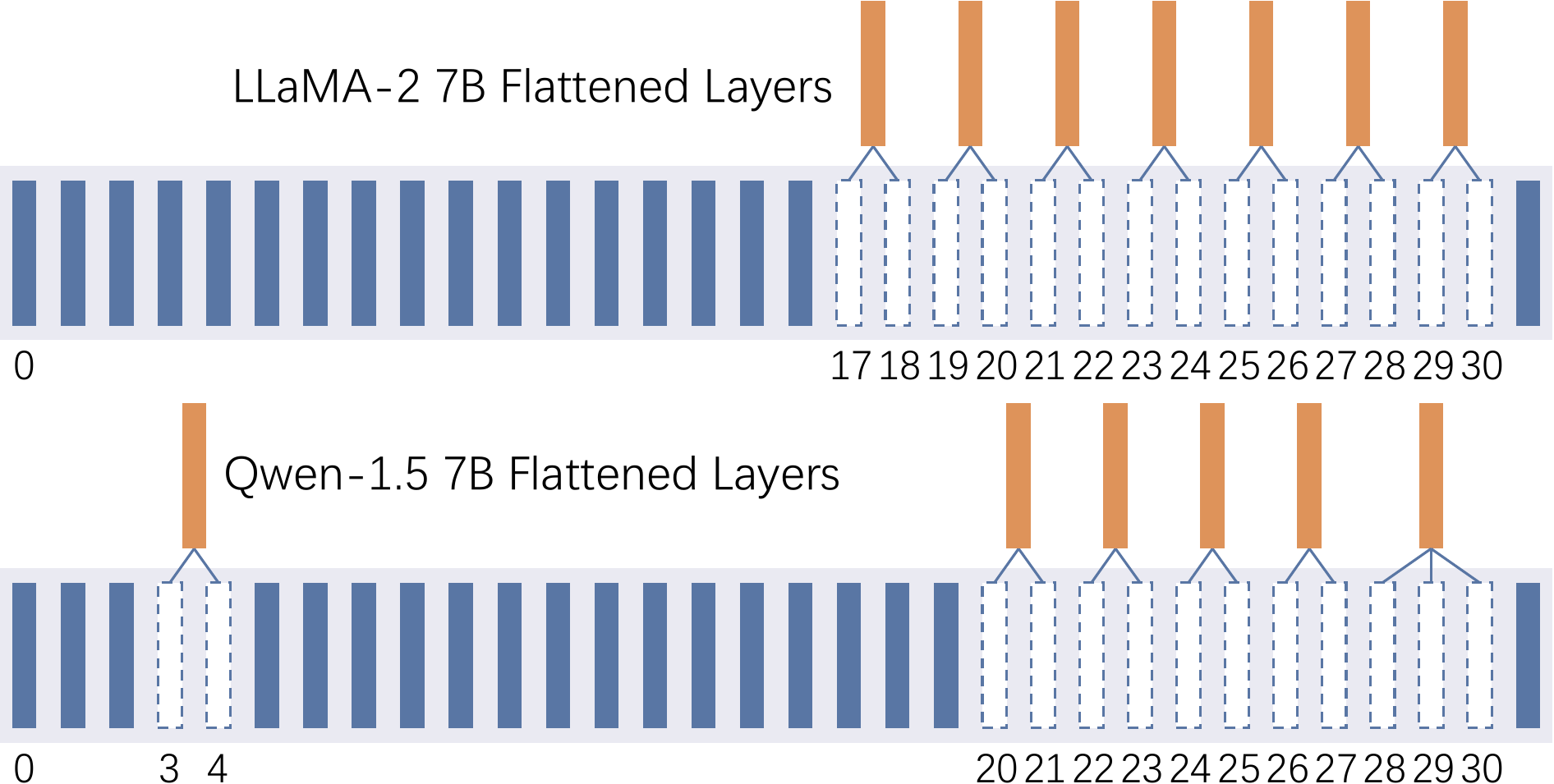}
    \caption{Flattened Layer indices.}
    \label{fig:flattened_layer_indices}
\end{minipage}
\hfill
\begin{minipage}[t]{0.53\textwidth}
\centering
  \begin{minipage}[t]{0.49\textwidth}
    \centering
    \includegraphics[width=\linewidth]{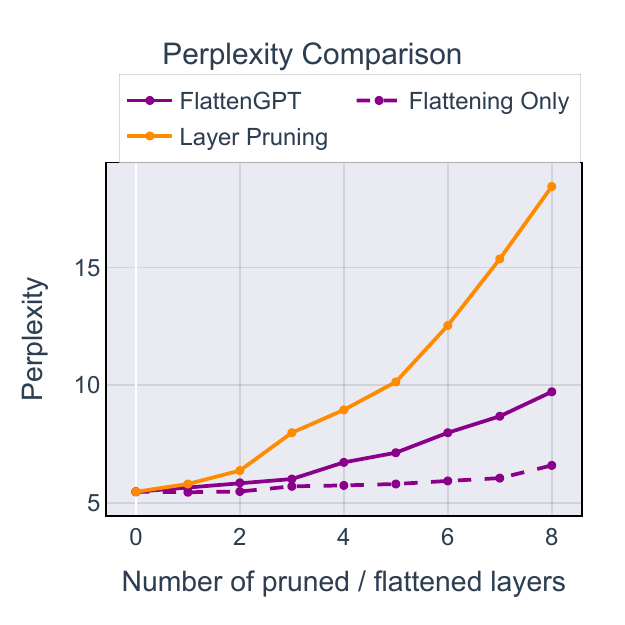}
  \end{minipage}
  \hfill
  \begin{minipage}[t]{0.49\textwidth}
    \centering
    \includegraphics[width=\linewidth]{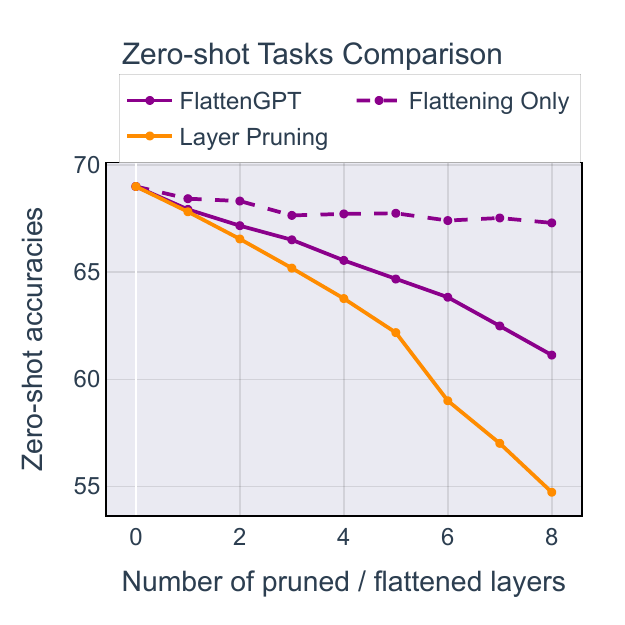}
  \end{minipage}
  \caption{Comparison of layer pruning and flattening.}
  \label{fig:prune_flatten_comparison}
\end{minipage}
\end{figure}


\subsection{Effectiveness of our channel pruning method}
The flattening operation changes the depth compression task into a channel pruning task.
This method shows an advantage of fine-grained depth compression, whereas it relies on the performance of the channel pruning method.
In this paper, we use a simple yet effective channel pruning method.
To validate the effectiveness of our channel pruning method, we conduct experiments with channel pruning only.
We use the sparsity distribution described in ModeGPT~\citep{modegpt}, and compare the channel pruning performance with other channel pruning methods.
As shown in Table~\ref{tab:channel_pruning}, our channel pruning approach has a clear advantage over previous pruning methods.
By combining the MHA pruning and MLP pruning, our method achieves the best performance, surpassing the previous channel pruning method, including SliceGPT~\citep{slicegpt} and ModeGPT~\citep{modegpt}.

\begin{table}[h]
\scriptsize
\caption{Zero-shot task performance of channel pruning methods calibrated with 128 samples from WikiText-2.}
\label{tab:channel_pruning}
\renewcommand{\arraystretch}{0.9}
\setlength{\tabcolsep}{11pt}
\centering
\begin{tabular}{l | c | c  c  c  c  c | c}
\toprule
\rowcolor{gray!20}
\bf Method	& \bf Sparsity	& \bf WinoG & \bf HellaS	& \bf PIQA	& \bf ARC-e	& \bf ARC-c	& \bf Avg.  \\
\midrule
LLaMA-2 7B (original)	& 0\%		& 69.06	& 75.99	& 79.11	& 74.58	& 46.25	& 69.00	\\
\midrule
SliceGPT				& 20\%		& 62.74	& 49.78	& 64.25	& 51.47	& 31.06	& 51.86	\\
ModeGPT					& 20\%		& 68.03	& 69.05	& 74.05	& 69.07	& 42.06	& 64.46	\\
\midrule
Our MLP pruning			& 20\%		& 66.06	& 66.54	& 73.23	& 65.19	& 38.91	& 61.99	\\
Our MHA pruning			& 21.02\%	& 66.93	& 69.64	& 73.94	& 63.97	& 42.24	& 63.34	\\
\rowcolor{cyan!10}
Our MHA + MLP Pruning		& 21.07\%	& 68.03	& 71.64	& 76.17	& 68.98	& 44.28	& \bf 65.82	\\
\midrule
\midrule
LLaMA-2 13B (original)	& 0\%		& 72.22	& 79.39	& 80.47	& 77.48	& 49.23	& 71.76	\\
\midrule
SliceGPT				& 20\%		& 67.17	& 53.58	& 65.83	& 55.81	& 35.84	& 55.65	\\
ModeGPT					& 20\%		& 70.32	& 68.96	& 74.53	& 74.07	& 46.16	& 66.81	\\
\midrule
\rowcolor{cyan!10}
Our MHA + MLP Pruning		& 21.07\%	& 71.43	& 75.26	& 77.58	& 71.68	& 45.39	& \bf 68.94	\\
\bottomrule
\end{tabular}
\end{table}

We further make ablations on the effectiveness of Nystr\"om approximation.
As shown in Table~\ref{tab:ablation_nystrom}, Nystr\"om approximation outperforms the channel selection only method, demonstrating the effectiveness of adjusting the down projection.
\begin{table}[h]
\scriptsize
\caption{Comparison of channel selection and Nystr\"om approximation.}
\centering
\renewcommand{\arraystretch}{0.9}
\setlength{\tabcolsep}{11pt}
\begin{tabular}{l|ccccccc}
\toprule
\bf Method	& \bf Sparsity	& \bf WinoG & \bf HellaS	& \bf PIQA	& \bf ARC-e	& \bf ARC-c	& \bf Avg.  \\
\midrule
Channel Selection			& 20\%	& 66.46	& 65.48	& 71.22	& 63.13	& 39.93	& 61.24	\\
\rowcolor{cyan!10}
+ Nystr\"om approximation	& 20\%	& 66.54	& 68.45	& 72.74	& 63.43	& 41.30	& 62.49	\\
\bottomrule
\end{tabular}
\label{tab:ablation_nystrom}
\end{table}

\subsection{Advantages of depth compression over width compression}
In this paper, we focus on the depth compression tasks.
Although previous depth compression methods perform much worse than the width compression ones, FlattenGPT has built a novel approach to improve this performance greatly.
In the main paper, we have shown that FlattenGPT achieves a better trade-off between performance and speed.
In this part, we will further show that FlattenGPT shows promising performance compared with the latest width compression method after recovery fine-tuning.
Table~\ref{tab:channel_pruning_rft} shows the performance with or without RFT.
LLM-pruner~\citep{llm_pruner} shows little improvement with RFT.
LLM Surgeon~\citep{llm_surgeon} does not show the results, but it claimed that LoRA improves compression performance in the smallest OPT-125m model, but not in larger models.
ModeGPT~\citep{modegpt} even demonstrates performance loss after RFT, which illustrates that the model probably suffers from overfitting.
FlattenGPT unifies the two tasks of deep compression and channel compression, making the pruned model more suitable for fine-tuning.
This is more practical than previous pruning methods.

\begin{table}[h]
\scriptsize
\caption{Zero-shot task performance of channel pruning methods calibrated with 128 samples from WikiText-2. $^\dagger$ indicates fine-tuned on Alpaca~\citep{alpaca} dataset. ModeGPT~\citep{modegpt} employs Alpaca as the calibration dataset. LLM Surgeon~\citep{llm_surgeon} does not show the results but claims that LoRA cannot improve the performance.}
\label{tab:channel_pruning_rft}
\renewcommand{\arraystretch}{0.9}
\setlength{\tabcolsep}{11pt}
\centering
\begin{tabular}{l | c | c  c  c  c  c | c}
\toprule
\rowcolor{gray!20}
\bf Method	& \bf Sparsity	& \bf WinoG & \bf HellaS	& \bf PIQA	& \bf ARC-e	& \bf ARC-c	& \bf Avg.  \\
\midrule
LLaMA-2 7B (original)	& 0\%		& 69.06	& 75.99	& 79.11	& 74.58	& 46.25	& 69.00	\\
\midrule
LLM-Pruner				& 18.82\%	& 61.17	& 66.13	& 76.66	& 64.86	& 37.88	& 61.34	\\
LLM-Pruner$^\dagger$	& 18.82\%	& 61.88	& 67.13	& 77.48	& 65.78	& 38.48	& 62.15	\\
LLM Surgeon				& 20\%		& 66.30	& 71.30	& 77.09	& 71.36	& 41.89	& 65.59	\\
ModeGPT					& 20\%		& 68.19	& 69.59	& 76.22	& 71.71	& 41.89	& 65.52	\\
ModeGPT$^\dagger$		& 20\%		& 66.30	& 68.07	& 77.20	& 70.45	& 42.92	& 64.99	\\
\midrule
\rowcolor{cyan!10}
FlattenGPT				& 20\%	& 67.40	& 70.74	& 74.59	& 64.44	& 41.98	& 63.83	\\
\rowcolor{cyan!10}
FlattenGPT$^\dagger$	& 20\%	& 68.75	& 73.01	& 74.97	& 67.40	& 45.05	& \bf 66.24	\\
\bottomrule
\end{tabular}
\end{table}

\subsection{Locations of flattened layers}
We show which transformer blocks are chosen to be flattened in Table~\ref{tab:flattened_layer_indices}.
The location of flattened transformer blocks is highly consistent across various target models.
The late blocks are almost flattened, except the last one or two, whereas the early blocks are rarely selected.
This is related to the similarity distribution in the model, where the late blocks have more similar input features.

\begin{table}[h]
\scriptsize
\caption{Locations of flattened Transformer blocks with target sparsity of 20\%.}
\label{tab:flattened_layer_indices}
\renewcommand{\arraystretch}{1.0}
\setlength{\tabcolsep}{25pt}
\centering
\begin{tabular}{l|c}
\toprule
\rowcolor{gray!20}
\bf Models		& \bf Merged Layer Index \\
\midrule
LLaMA-2 7B	& [[17, 18], [19, 20], [21, 22], [23, 24], [25, 26], [27, 28], [29, 30]] \\
\midrule
LLaMA-2 13B & [[23, 24], [25, 26], [27, 28], [29, 30], [31, 32], [33, 34], [35, 36], [37, 38]]\\
\midrule
\multirow{2}{*}{LLaMA-2 70B}	& [[14, 15], [46, 47], [49, 50], [51, 52], [54, 55], [57, 58], [59, 60, 61], \\
~			& [62, 63, 64], [65, 66, 67], [68, 69], [70, 71], [72, 73], [74, 75]] \\
\midrule
LLaMA-3 8B	& [[16, 17], [18, 19], [20, 21], [23, 24], [25, 26], [27, 28], [29, 30]] \\
\midrule
Qwen-1.5 7B	& [[3, 4], [20, 21], [22, 23], [24, 25], [26, 27], [28, 29, 30]] \\
\midrule
Qwen-1.5 14B& [[7, 8], [10, 11], [19, 20], [24, 25], [26, 27], [28, 29], [30, 31], [32, 33], [34, 35], [36, 37]]\\
\bottomrule
\end{tabular}
\end{table}

\subsection{Dependency on calibration dataset}
We evaluate the dependency on the calibration dataset in Table~\ref{tab:calibration_dataset}.
We use the calibration set size of 128 and sequence length of 2048 for WikiText-2~\citep{wikitext2} and Alpaca datasets.
The results show that WikiText-2 has a slightly better performance, probably due to the dataset quality.
The alpaca dataset is not as representative as a high-quality dataset, thus the performance is slighter lower than WikiText-2.

\begin{table}[h]
\scriptsize
\caption{Results on different calibration dataset.}
\label{tab:calibration_dataset}
\renewcommand{\arraystretch}{0.9}
\setlength{\tabcolsep}{6pt}
\centering
\begin{tabular}{l | c | c | c | c  c  c  c  c | c}
\toprule
\rowcolor{gray!20}
\bf Method	& \bf Dataset	& \bf PPL	& \bf Sparsity	& \bf WinoG & \bf HellaS	& \bf PIQA	& \bf ARC-e	& \bf ARC-c	& \bf Avg.  \\
\midrule
LLaMA-2 7B (original)	& -				& 0\%		& 5.47	& 69.06	& 75.99	& 79.11	& 74.58	& 46.25	& 69.00	\\
\midrule
FlattenGPT				& WikiText-2	& 21.02\%	& 8.68	& 67.40	& 70.74	& 74.59	& 64.44	& 41.98	& 63.83	\\
FlattenGPT				& Alpaca		& 21.02\%	& 11.84	& 67.64	& 67.92	& 72.31	& 62.54	& 39.25	& 61.93	\\
\bottomrule
\end{tabular}
\end{table}

\subsection{Dependency on the Calibration Dataset Size}
We test the size of the calibration dataset from 64 to 1024 samples as shown in Table~\ref{tab:calibration_dataset_size}. Results confirm that 128 samples suffice, as larger sets yield marginal gains ($<0.2\%$).

\begin{table}[h]
\scriptsize
\caption{The zero-shot accuracies on LLaMA-2 7B with different calibration dataset size.}
\label{tab:calibration_dataset_size}
\renewcommand{\arraystretch}{0.9}
\setlength{\tabcolsep}{10pt}
\centering
\begin{tabular}{l | c  c  c  c  c}
\toprule
\rowcolor{gray!20}
\bf Num of Samples	& \bf 64	& \bf 128	& \bf 256	& \bf 512	& \bf 1024	\\
\midrule
Accuracy			& 61.17		& 62.49		& 62.25		& 62.58		& 62.60		\\
\bottomrule
\end{tabular}
\end{table}

\subsection{Generalization on other tasks}

We conduct experiments on InternVL-C 6B, which is a large vision transformer that exhibits a similar cross-layer similarity pattern to the LLMs.
The results in Table~\ref{tab:generalization_vit} show that our method has good generalization ability on vision transformers.
The multimodal transformers are usually composed of an LLM transformer and a vision encoder transformer.
Therefore, it is reasonable to apply our method to the LLM and the vision encoder individually.
\begin{table}[h]
\scriptsize
\caption{The zero-shot accuracies on InternVL-C 6B with 20\% pruning ratio.}
\label{tab:generalization_vit}
\renewcommand{\arraystretch}{0.9}
\setlength{\tabcolsep}{10pt}
\centering
\begin{tabular}{l | l  c  c  c  c}
\toprule
\rowcolor{gray!20}
\bf Model	& \bf Method	& \bf IN-1K	& \bf IN-A	& \bf IN-R	\\
\midrule
\multirow{3}{*}{InternVL-C} & Dense			& 83.2		& 83.8		& 95.5		\\
& ShortGPT		& 79.7		& 57.9		& 90.4		\\
& FlattenGPT	& 81.6		& 74.6		& 93.7		\\
\bottomrule
\end{tabular}
\end{table}

\subsection{Generalization beyond Transformer Architecture}
Considering the various architectures available, it is far beyond the scope of this paper.
Yet we can provide an analysis of the generalization of our method. Since most architectures use skip connections, the flattening stage is very general and should work on these architectures as well.
However, there are not always appropriate channel pruning methods for these architectures.
If there is an appropriate channel pruning method, our method would work on various architectures.
Besides, transformer is a widespread baseline for many tasks, and our experiments on multiple transformer architectures and tasks have shown the effectiveness of our method.

\section{Limitation}
\label{sec:limitation}

FlattenGPT provides a novel approach for fine-grained LLMs depth compression, yet there are still some limitations.
First, FlattenGPT is performed on uniform architectures, where flattening will not change the model architecture significantly.
It is not trivial to compress the hybrid architectures, such as a combination of transformer~\cite{attn_is_all_you_need} and mamba~\citep{mamba}.
However, it is still worth researching the fine-grained depth compression method, as layer pruning methods operate on a very high granularity and cause performance degradation.
Second, we use one of the channel pruning methods to implement our FlattenGPT, while our framework is not constrained to specific channel pruning methods.
Developing better channel pruning methods will improve our depth compression method as well.


\end{document}